\newcommand{\N}{\mathbb{N}}
\newcommand{\R}{\mathbb{R}}
\newcommand{\defeq}{\vcentcolon=}
\newcommand{\eqdef}{=\vcentcolon}
\newtheorem{theorem}{Theorem}[section]
\newtheorem{defn}[theorem]{Definition}
\newtheorem{lem}[theorem]{Lemma}
\newtheorem{prop}[theorem]{Proposition}
\newtheorem{cor}[theorem]{Corollary}
\newtheorem{rk}[theorem]{Remark}
\providecommand{\keywords}[1]
{
  \small	
  \textbf{\textit{Keywords---}} #1
}
\title{Nonclosedness of Sets of Neural Networks in Sobolev Spaces}
\date{}
\author{
	Scott Mahan\thanks{Department of Mathematics, University of California San Diego. 9500 Gilman Drive \#0112, La Jolla, CA 92037.} \footnote{Corresponding author. E-mail: scmahan@ucsd.edu} %e-mail: \texttt{scmahan@ucsd.edu}} 
	\and
	Emily J.\ King\thanks{Department of Mathematics, Colorado State University. 1874 Campus Delivery, Fort Collins, CO 80523.} %e-mail: \texttt{emily.King@colostate.edu}} 
	\and
	Alex Cloninger\footnotemark[1] \thanks{Halicioglu Data Science Institute, University of California San Diego. 9500 Gilman Drive \#0112, La Jolla, CA 92037.}
}
\begin{document}
\maketitle

\begin{abstract}
	We examine the closedness of sets of realized neural networks of a fixed architecture in Sobolev spaces. For an exactly $m$-times differentiable activation function $\rho$, we construct a sequence of neural networks $(\Phi_n)_{n \in \N}$ whose realizations converge in order-$(m-1)$ Sobolev norm to a function that cannot be realized exactly by a neural network. Thus, sets of realized neural networks are not closed in order-$(m-1)$ Sobolev spaces $W^{m-1,p}$ for $p \in [1,\infty]$. We further show that these sets are not closed in $W^{m,p}$ under slightly stronger conditions on the $m$-th derivative of $\rho$. % and hopefully more generally, when $\rho$ fails to be $(m+1)$-times differentiable at only one point
	For a real analytic activation function, we show that sets of realized neural networks are not closed in $W^{k,p}$ for \textit{any} $k \in \N$. %These results suggest that training a network to approximate a target function in Sobolev norm does not prevent unbounded parameter growth. 
	The nonclosedness allows for approximation of non-network target functions with unbounded parameter growth.
	We partially characterize the rate of parameter growth for most activation functions by showing that a specific sequence of realized neural networks can approximate the activation function's derivative with weights increasing inversely proportional to the $L^p$ approximation error.
	Finally, we present experimental results showing that networks are capable of closely approximating non-network target functions with increasing parameters via training.
	%we present experimental results demonstrating the nonclosedness of sets of realized neural networks. 
	%In these experiments, we closely approximate a non-network target function with increasingly large network parameters.
	
	% However, the space of all realized neural networks with ``weight norm'' $\left\| \Phi \right\| \leq C$ \textit{is} closed in Sobolev spaces. Hence, if $\Phi_n \to f$ in Sobolev norm and $f$ cannot be realized by a neural net, then we must have $\left\| \Phi_n \right\| \to \infty$. This result indicates that $L^2$ regularization prevents convergence outside the space of neural nets. Finally, we (try) to construct a sequence $(\Phi_n)_{n \in \N}$ such that $\left\| \Phi \right\| = \mathcal{O}(n)$ but $\left\| \Phi_n - f \right\|_{W^{k,p}} = \mathcal{O}(n^{-d})$. The existence of such a sequence suggests using a regularization penalty that is stronger in higher dimensions to prevent quick convergence to a function outside the space of neural nets. \ek{rework the end to present the numerical experiments in a new light}
\end{abstract}

\keywords{Fixed-architecture neural networks, Neural network expressivity, Closedness, Sobolev space}

\section{Introduction} \label{sec:intro}

%\ac{Specificity: We are in the setting of fixed size neural networks, rather than growing number of nodes.}

%In machine learning, many problems are presented as approximating an unknown function using only training data -- sampled values of predictor variables and their associated output. This function approximation is often attempted with deep neural networks, which consist of a sequence of alternating affine and nonlinear transformations. After observing the training data, the parameters of the affine transformations are adjusted to better match the observed output. 

%\ac{comment: we fix the architecture, so we exchange number of node increase for parameter norm increase.  Does not exist a finite architecture that contains all Sobolev functions.}

From an approximation theory perspective, neural networks use observed training data to approximate an unknown target function. %The parameters of the neural network are repeatedly updated, ideally until the network converges to the target function. Thus,
Studying topological properties of sets of neural networks will reveal what kinds of functions can be approximated by neural networks. In particular, closedness of sets of networks is a topological property of interest. If these sets are closed with respect to some norm, then one can construct a sequence of neural networks converging to a target function in that norm if and only if that target function is itself a network. On the other hand, nonclosedness would mean that neural networks can approximate target functions that are not networks themselves. An up-to-date survey on the approximation results for neural networks can be found in \citep{GRK20}. %, but this would require the parameters of the sequence of networks to grow without bound. %\ek{Here and elsewhere: not sure if "learn" is the word we want to use, instead of approximate.}

To allow neural networks to approximate a wider class of functions, the number of nodes in the network can be increased. As long as the number of hidden nodes is allowed to grow without bound, Hornik's Universal Approximation Theorem shows that neural networks with only one hidden layer can approximate any $p$-integrable function to arbitrary accuracy \citep{H91}. Other approximation theorems show that neural networks are dense in other function classes, depending on the properties of the activation function, but many of these results allow the depth or width of the network to vary \citep{C89,HSW89,HSW90,KSH12}. These results suggest that sets of realized neural networks are not closed in the corresponding function spaces, since not all of these functions can be represented exactly by a neural network. However, these are results about sets of networks of any width. %Alternatively, the parameters of the network can be allowed to grow without bound. However, networks are often trained with a fixed size and some regularization of the parameters. We will study the closedness properties of sets of neural networks of a fixed size, but we enhance the expressiveness of the networks by not bounding the parameters.

%\ac{Try moving Related works up here as continuation, or as Section 1.1}

%Hornik's Universal Approximation Theorem shows that neural networks with only one hidden layer can approximate any $p$-integrable function to arbitrary accuracy, as long as the number of hidden nodes is allowed to grow without bound \citep{H91}. Other approximation theorems show that neural networks are dense in other function classes, depending on the properties of the activation function, but most of these results allow the depth or width of the network to vary \citep{C89,HSW89,HSW90,KSH12}. Hornik's result suggests that sets of realized neural networks are not closed, since not every $p$-integrable function can be represented exactly by a neural network. However, this is a result about sets of networks of any width.

In practice, the architecture of a neural network is fixed before the learning process begins. Hence, we consider properties of sets of neural networks with a fixed architecture. Related to closedness is the best approximation property in $L^p$ spaces, which holds if every $f \in L^p$ has at least one realized neural network $g$ such that $\|f-g\|_{L^p}$ is minimized over all possible networks with the same architecture. In disproving the best approximation property for sigmoidal neural networks of a fixed size, \citep{GP90} show that sets of realized neural networks with the sigmoid activation function are not closed in $L^p$ spaces and claim that this should be true for all nonlinear activation functions. However, \citep{K95} proves that sets of networks with the Heaviside activation function are closed in $L^p$ spaces, and \citep{KKV20} proves a similar result for generalizations of Heaviside-type networks. Moreover, \citep{KKV00} shows that Heaviside perceptron networks do have the best approximation property, but that best approximation maps with these networks are not unique or continuous. More generally, no best approximation maps using fixed-architecture neural networks are continuous \citep{KKV99}.

Petersen, Raslan, and Voigtlaender discuss the closedness and other topological properties of sets of realized neural networks \citep{PRV19,PRV20}. Among other results, they prove that for most commonly used activation functions, these sets are not closed with respect to $L^p$ norms. However, sets of all neural networks with a fixed architecture and uniformly bounded parameters \textit{are} closed, indicating that learning a non-network target function requires the parameters to grow without bound. For example, learning a target function that has fewer derivatives than the activation function requires at least one network parameter to tend to infinity even as the network width increases, as seen in \citep{M97}. We partially investigate the relationship between $L^p$ approximation error and parameter size, finding that the two are approximately inversely proportional when learning the derivative of the activation function.

In \citep{PRV19} (a shortened conference paper version of \citep{PRV20}), the authors speculate that sets of neural networks may be closed in Sobolev spaces, where convergence is stronger. However, we show that this is not true, extending their \textit{nonclosedness} results to convergence in Sobolev norm under additional smoothness assumptions on the activation function using proof techniques similar to those in \citep{PRV20}. Our results apply to the case $p=\infty$ after minor modifications, so sets of realized neural networks are also not closed in $W^{m,\infty}$. %\ek{Again: rephrase the importance of the numerical experiments}

% We extend these results to Sobolev spaces. Specifically, we show that for an activation function that is $m$-times differentiable, sets of all neural networks with a fixed architecture and that activation function are not closed in order-$(m-1)$ Sobolev spaces. If the activation function is smooth, then these sets are not closed in any order Sobolev spaces. However, bounding the weights of the affine transformations results in a closed set. Hence, any sequence of neural nets that converges to a function outside the set of networks must have weights whose norms go to infinity. This results suggests that a regularization penalty on the norm of the weights should be used to prevent convergence outside the set of neural networks.

In some cases, such as network compression or distillation, we have data about the derivatives of the target function in addition to the training data. In these instances, one can train a network to learn the target function and its derivatives. This approach, introduced in \citep{COJSP17} as Sobolev training, often requires less training data and performs better on testing data. Hence, it is natural to consider the theoretical properties of neural networks in Sobolev spaces, as we do in this paper. We also provide some experimental results using Sobolev training with activation functions of varying degrees of smoothness, %to highlight our nonclosedness results. %The results indicate that Sobolev training does not prevent parameter explosion, as 
where we are able to approximate non-network target functions in Sobolev norm. This result indicates that sets of realized neural networks are indeed not closed in Sobolev spaces, but also that Sobolev training does not prevent parameter growth and allows us to approximate functions on the boundary of these sets of realized networks. Our experiments exhibit slow parameter growth relative to a fast decrease in approximation error, but there may be target functions that require much faster parameter growth to approximate. %\ek{Again, parameter growth vs. loss function decrease}

\subsection{Contributions of this Work} \label{subsec:contributions}

%\ac{Can make list of contributions to separate out each thing we contribute.}

Our work considers sets of realized neural networks with a fixed architecture and a fixed activation function. In particular, we study the closedness of these sets of realized neural networks in Sobolev spaces. Our main contributions are:
%\ek{add links to the theorems in the numbered list}

\begin{enumerate}[label=\arabic*)]
    \item We establish in Theorem \ref{thm:m-(m+1)} that for an $m$-times differentiable activation function that is not $(m+1)$-times differentiable, sets of realized neural networks are \textit{not} closed in order-$(m-1)$ Sobolev spaces $W^{m-1,p}$ for $p \in [1,\infty]$. We prove this result by constructing a sequence of neural networks that converges in Sobolev norm to a target function that is not a neural network, which follows the approach in \citep{PRV20}.
    
    \item We extend the nonclosedness result of Theorem \ref{thm:m-(m+1)} to $W^{m,p}$ under an additional assumption on the activation function.
    
    \item For real analytic activation functions, Theorem \ref{thm:rho_smooth} shows that sets of realized neural networks are not closed in any order Sobolev spaces.
    
    \item We show analytically in Proposition~\ref{prop:rates} that for most activation functions, the $L^p$ approximation error decays inversely proportional to the growth of network parameters for a given sequence of networks approximating the derivative of the activation function.
    The relationship between approximation error and weight growth relates to Theorem 1 in \citep{M97}, although that result holds for $L^\infty$ error of shallow networks with increasing width.
    
    \item We present some experiments in Section \ref{sec:experiments} demonstrating that neural networks can approximate target functions that require increasingly large parameters. Our example achieves a fast decay in approximation error with a relatively slow growth in the network parameters, which may not be the case for other non-network target functions. These results appear to be robust to varying degrees of smoothness of the activation function and to certain classes of target functions. %\ek{Again, parameter growth vs. loss function decrease}%, even in Sobolev norm. 
\end{enumerate}

Our nonclosedness results all indicate that neural networks can be trained to approximate non-network target functions in Sobolev norm. However, we will see that doing so will necessarily cause unbounded growth of network parameters. Thus, the training process may be difficult in practice, or regularization techniques may prevent a network from approximating a non-network target function. In our experiments, we train sequences of networks to approximate non-network target functions in Sobolev norm. The networks are able to closely approximate these target functions, providing further evidence that sets of realized neural networks are not closed in Sobolev spaces. Moreover, the ability to numerically approximate functions on the boundary of these sets of realized neural networks speaks to the expressiveness of neural networks in practice. %An interesting further question is whether some target functions can only be closely approximated by parameters so large that that the network training process fails. %\ek{outline of paper?}

%\ac{Maybe mention the questions that arise in Experimental section?}

\subsection{Outline of this Paper} \label{subsec:outline}

Our work first provides background material on neural networks, then discusses the closedness of realized neural networks in Sobolev spaces, and finally presents some related numerical results. Section \ref{sec:notation} lays out the definitions and notation required for the rest of the paper. In Section \ref{sec:nonclosedness} we begin with our main results that sets of realized neural networks are not closed in Sobolev spaces under reasonable conditions on the activation function. On the other hand, Section \ref{subsec:closedness} studies realizations of networks with bounded parameters, and presents a result that these sets of realizations \textit{are} closed in Sobolev spaces. Section \ref{subsec:rates} analyzes the relationship between $L^p$ approximation error and parameter growth for a network learning its activation function's derivative. We provide experimental results in Section \ref{sec:experiments} that demonstrate the nonclosedness of sets of realized neural networks and show that some classes of non-network target functions can indeed be approximated in Sobolev norm by a sequence of networks with increasing parameters. %Section \ref{sec:conclusion} concludes the paper.%, and Section \ref{app:proofs} gives proofs of our results. 

%\subsection{Related Work} \label{subsec:related}

\section{Notation and Definitions} \label{sec:notation}

We first define neural networks. Every network has an architecture which specifies the input dimension, the number of layers, and the number of nodes in each layer. In addition, the network consists of matrix-vector pairs that determine the affine transformation between consecutive layers.

\begin{defn}
	\citep{PRV20} Let $d,L \in \N$. A \textbf{neural network $\Phi$ with input dimension $d$ and $L$ layers} is a sequence of matrix-vector pairs
	\begin{equation*}
		\Phi = \big( (A_1,b_1),\dots,(A_L,b_L) \big),
	\end{equation*}
	where $N_0=d$ and $N_1,\dots,N_L \in \N$, and where each $A_\ell$ is an $N_\ell \times N_{\ell-1}$ matrix, and $b_\ell \in \R^{N_\ell}$. We call $(d,N_1,\dots,N_L)$ the \textbf{architecture} of $\Phi$. $N_L$ is the \textbf{output dimension}. Define $\mathcal{NN}(d,N_1,\dots,N_L)$ to be the \textbf{set of all neural networks} $\Phi$ with architecture $(d,N_1,\dots,N_L)$. 
\end{defn}

To emphasize the role of the activation function, we distinguish between a neural network and a realized neural network. A realized network is a function defined by alternately applying the affine transformations of the network and the activation function. We also define the set of all realized neural networks with a fixed architecture and the same activation function.

\begin{defn}
	\citep{PRV20} Let $\Phi$ be a neural network, $\Omega \subset \R^d$, and $\rho : \R \to \R$. The \textbf{realization of $\Phi$ with activation function $\rho$ over $\Omega$} is the function $R_\rho^\Omega(\Phi) : \Omega \to \R^{N_L}$ defined by
	\begin{equation*}
		R_\rho^\Omega(\Phi)(x) = W_L ( \rho(W_{L-1} ( \cdots \rho( W_1(x)))))
	\end{equation*}
	where the affine transformation $W_\ell : \R^{N_{\ell-1}} \to \R^{N_\ell}$ is defined by $W_\ell(x) = A_\ell x + b_\ell$ and $\rho$ is evaluated componentwise. Define $R_\rho^\Omega$ to be the \textbf{realization map} $\Phi \mapsto R_\rho^\Omega(\Phi)$, and let
	\begin{equation*}
		\mathcal{RNN}_\rho^\Omega(d,N_1,\dots,N_L) \defeq R_\rho^\Omega\big( \mathcal{NN}(d,N_1,\dots,N_L) \big).
	\end{equation*}
	We call $\mathcal{RNN}_\rho^\Omega(d,N_1,\dots,N_L)$ the \textbf{set of $\rho$-realizations of networks with architecture $(d,N_1,\dots,N_L)$ over $\Omega$}. 
\end{defn}

We will sometimes need to concatenate networks, which creates a new neural network consisting of the matrix-vector pairs of the first network followed by the pairs of the second network.

\begin{defn}
	\citep{PRV20} Let $\Phi_1 = \big( (A_1^1,b_1^1),\dots,(A_{L_1}^1,b_{L_1}^1) \big)$ and $\Phi_2 = \big( (A_1^2,b_1^2),\dots,(A_{L_2}^2,b_{L_2}^2) \big)$ be two neural networks such that the input dimension of $\Phi_1$ equals the output dimension of $\Phi_2$. Then
	\begin{equation*}
		\Phi_1 \bullet \Phi_2 \defeq \big( (A_1^2,b_1^2),\dots,(A_{L_2-1}^2,b_{L_2-1}^2),(A_1^1 A_{L_2}^2, A_1^1b_{L_2}^2 + b_1^1), (A_2^1,b_2^1),\dots,(A_{L_1}^1,b_{L_1}^1) \big)
	\end{equation*}
	defines a neural network with $L_1+L_2-1$ layers. We call $\Phi_1 \bullet \Phi_2$ the \textbf{concatenation of $\Phi_1$ and $\Phi_2$}.
\end{defn}
Note that for any activation function $\rho: \R \to \R$ and any $\Omega \subset \R^{d_2}$, we have $R_\rho^\Omega(\Phi_1 \bullet \Phi_2) = R_\rho^{\R^{d_1}}(\Phi_1) \circ R_\rho^\Omega(\Phi_2)$, where $d_i$ is the input dimension of $\Phi_i$. That is, concatenation of neural networks corresponds to function composition of the realizations of those networks.

For a fixed network architecture $(d,N_1,\dots,N_L)$, we want to consider the closedness of the set $\mathcal{RNN}_\rho^\Omega(d,N_1,\dots,N_L)$ in Sobolev spaces. We define Sobolev spaces below.

\begin{defn} %\ek{Do we need $\Omega$ measurable and with non-empty interior?}
	Let $k \in \N$, let $\Omega \subset \R^k$ be measurable with non-empty interior, and let $1 \leq p \leq \infty$. The \textbf{Sobolev space} $W^{k,p}(\Omega)$ consists of all functions $f$ on $\Omega$ such that for all multi-indices $\alpha$ with $|\alpha|\leq k$, the mixed partial derivative $f^{(\alpha)} \defeq D^\alpha f$ exists in the weak sense and belongs to $L^p(\Omega)$. That is,
	\begin{equation*}
		W^{k,p}(\Omega) = \left\{ f \in L^p(\Omega) : D^\alpha f \in L^p(\Omega) \text{ for all } |\alpha| \leq k \right\}.
	\end{equation*}
	The number $k$ is the \textbf{order} of the Sobolev space. The norm
	\begin{equation*}
		\|f\|_{W^{k,p}(\Omega)} \defeq \sum_{|\alpha| \leq k} \| D^\alpha f \|_{L^p(\Omega)}
	\end{equation*}
	makes $W^{k,p}(\Omega)$ a Banach space for any $k \in \N$. Note that $W^{0,p}(\Omega) = L^p(\Omega)$. 
\end{defn}

\section{(Non)closedness in Sobolev Spaces} \label{sec:nonclosedness}

In \citep{PRV20} it is shown that $\mathcal{RNN}_\rho^{[-B,B]^d}(d,N_1,\dots,N_{L-1},1)$ is \textit{not} closed in $L^p([-B,B]^d)$ for any $p \in (0,\infty)$, under mild assumptions satisfied by most commonly used activation functions (including ReLU, the rectified linear unit). Moreover, these sets of realized neural networks of a fixed architecture are \textit{not} closed in $C([-B,B]^d)$ with respect to the $L^\infty$ norm for most commonly used activation functions. However, sets of ReLU-realizations of two-layer networks \textit{are} closed in $C([-B,B]^d)$. These results are shown for $[-B,B]^d$, but generalize to any compact set $\Omega \subset \R^d$ with non-empty interior. 

In this work, we investigate the closedness of sets of realized neural networks in Sobolev spaces. Since convergence in Sobolev norm is stronger than $L^p$ convergence, Petersen, Raslan, and Voigtlaender anticipate that $\mathcal{RNN}_\rho^{[-B,B]^d}(d,N_1,\dots,N_{L-1},1)$ may be closed in Sobolev spaces \citep{PRV19}. We prove that this is often not the case. Provided that the activation function $\rho$ is $m$-times differentiable with bounded derivatives, these sets are not closed in $W^{m-1,p}([-B,B]^d)$ for any $p \in [1,\infty]$. 

\begin{theorem} \label{thm:m-(m+1)} %\ek{restriction on $m$ needs to be stated}
	Let $m, d \in \N$, $p \in [1,\infty]$, and $B>0$. Define $\Omega = [-B,B]^d$. Consider a network architecture $(d,N_1,\dots,N_{L-1},1)$ with $L \geq 2$ and $N_{L-1} \geq 2$. Suppose that $\rho \in C^m(\R) \setminus C^{m+1}(\R)$ and all derivatives of $\rho$ up to order $m$ are locally $p$-integrable and bounded on compact sets. Then:
	\begin{itemize}
	    \item The set $\mathcal{RNN}_\rho^{\Omega}(d,N_1,\dots,N_{L-1},1)$ is not closed in $W^{m-1,p}(\Omega)$.
	    
	    \item If additionally $\rho^{(m)}$ is absolutely continuous and the weak derivative $\rho^{(m+1)}$ exists and is in $L^p(\Omega)$, then $\mathcal{RNN}_\rho^{\Omega}(d,N_1,\dots,N_{L-1},1)$ is not closed in $W^{m,p}(\Omega)$.
	\end{itemize}
\end{theorem}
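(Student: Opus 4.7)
\emph{Overall strategy.} Following the finite-difference template of \citep{PRV20}, the plan is to construct a sequence $(\Phi_n)_{n\in\N}$ of networks with the prescribed architecture whose realizations converge in $W^{m-1,p}(\Omega)$ to a target function of the form $f(x)=\rho'(ax_1+b)$, and then to argue that $f$ is not itself a realization. The architectural hypothesis $N_{L-1}\ge 2$ is precisely what allows me to earmark two neurons in the last hidden layer to compute the forward difference
\begin{equation*}
\frac{\rho(z+h_n)-\rho(z)}{h_n}, \qquad h_n\to 0,
\end{equation*}
while zeroing out the remaining $N_{L-1}-2$ columns of $A_L$. For $L=2$ the scalar argument $z=ax_1+b$ is produced directly by the first affine map; for $L\ge 3$ I would propagate $x_1$ through the intermediate hidden layers using the identity-approximation device $\tfrac{1}{\epsilon\rho'(b_0)}(\rho(\epsilon y+b_0)-\rho(b_0))=y+O(\epsilon)$, which is admissible since $\rho\notin C^{m+1}$ forces $\rho$ non-constant and hence $\rho'(b_0)\ne 0$ for some $b_0\in\R$.

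\emph{Convergence in $W^{m-1,p}(\Omega)$.} Because $g(x)=ax_1+b$ is affine, for every multi-index $\alpha$ with $|\alpha|\le m-1$ one obtains the clean identity
\begin{equation*}
D^\alpha R_\rho^\Omega(\Phi_n)(x)\;=\;a^{|\alpha|}\,\frac{\rho^{(|\alpha|)}(g(x)+h_n)-\rho^{(|\alpha|)}(g(x))}{h_n}.
\end{equation*}
Since $|\alpha|\le m-1$, the derivative $\rho^{(|\alpha|)}$ lies in $C^1(\R)$, so the mean value theorem rewrites the quotient as $\rho^{(|\alpha|+1)}(g(x)+\theta_n(x)h_n)$ for some $\theta_n(x)\in[0,1]$. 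Uniform continuity of $\rho^{(|\alpha|+1)}$ on the compact set $g(\Omega)+[-|h_n|,|h_n|]$ then delivers uniform, hence $L^p$, convergence to $D^\alpha f(x)=a^{|\alpha|}\rho^{(|\alpha|+1)}(g(x))$. Summing the resulting $L^p$ bounds over $|\alpha|\le m-1$ yields $R_\rho^\Omega(\Phi_n)\to f$ in $W^{m-1,p}(\Omega)$.

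\emph{The limit is not a realization.} Every realization $R_\rho^\Omega(\Psi)$ is a composition of affine maps and the $C^m$ activation $\rho$ applied componentwise, so $R_\rho^\Omega(\Psi)\in C^m(\Omega)$. The candidate limit, however, satisfies
\begin{equation*}
D_{x_1}^{m-1}f(x)\;=\;a^{m-1}\,\rho^{(m)}(ax_1+b),
\end{equation*}
and $\rho^{(m)}$ fails to have a continuous derivative on $\R$ because $\rho\notin C^{m+1}$. Choosing $a\ne 0$ and $b$ so that $g(\Omega^\circ)$ contains a point at which $\rho^{(m)}$ is not continuously differentiable forces $D_{x_1}^{m-1}f\notin C^1$, so $f\notin C^m(\Omega)$; since the uniform convergence fixes the (continuous) pointwise representative of $f$, $f$ cannot agree a.e.\ with any $C^m$ function, and hence $f\notin\mathcal{RNN}_\rho^\Omega(d,N_1,\ldots,N_{L-1},1)$. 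This witnesses nonclosedness in $W^{m-1,p}(\Omega)$.

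\emph{Upgrade to $W^{m,p}(\Omega)$ and the main obstacle.} The same sequence and the same non-realizability argument carry over verbatim; only the convergence of the top-order derivatives needs to be strengthened. For $|\alpha|=m$,
\begin{equation*}
D^\alpha R_\rho^\Omega(\Phi_n)(x)\;=\;a^m\,\frac{\rho^{(m)}(g(x)+h_n)-\rho^{(m)}(g(x))}{h_n}.
\end{equation*}
Under the extra hypothesis that $\rho^{(m)}$ is absolutely continuous with weak derivative $\rho^{(m+1)}\in L^p$, the right-hand side is a standard $L^p$ difference quotient of a $W^{1,p}$ function, so it converges to $a^m\rho^{(m+1)}(g(x))=D^\alpha f(x)$ in $L^p(\Omega)$. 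The step I expect to be most delicate is the $L\ge 3$ bookkeeping: faithfully producing $z=ax_1+b$ at the last hidden layer through the fixed widths $N_1,\ldots,N_{L-2}$ via the identity-approximation trick, and simultaneously running a diagonal extraction that preserves $L^p$ convergence of every partial derivative of order $\le m-1$ (resp.\ $\le m$). This is conceptual bookkeeping rather than a new idea, but the Faà di Bruno-style combinatorics at high derivative orders is the principal technical hurdle.
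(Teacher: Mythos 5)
Your $L=2$ construction is essentially the paper's: two earmarked neurons in the last hidden layer compute the forward difference $n(\rho(z+1/n)-\rho(z))$, the limit is $\rho'$ precomposed with an inner map whose range contains a point where $\rho^{(m)}$ fails to be continuously differentiable, and non-realizability follows from $\mathcal{RNN}_\rho^{\Omega}\subset C^m(\Omega)$ together with the fact that a continuous function agreeing a.e.\ with a $C^m$ function is itself $C^m$. Where you diverge is the treatment of $L\ge 3$, and that is where the gap sits. The paper does not approximate the identity through the intermediate layers: Lemma \ref{lem:J} produces an \emph{exact} realization $J=R_\rho^\Omega(\Phi)$ of a width-one, $(L-1)$-layer network with $J(\Omega)\supset[-C,C]$, where $[-C,C]$ contains the bad point of $\rho^{(m)}$. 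The approximating sequence is then $f_n=h_n\circ J$ with $J$ \emph{fixed}, so Fa\`a di Bruno yields difference quotients of $\rho^{(|\pi|)}$ multiplied by fixed, bounded derivatives of $J$, and only the single limit $n\to\infty$ (handled by Lemma \ref{lem:Folland}) remains. The point you are missing is that $J$ need not be anywhere near a coordinate projection --- surjectivity onto $[-C,C]$ is the only property used --- which is exactly what makes the deep case no harder than the shallow one.

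Your approximate-identity route instead introduces a second parameter $\epsilon$ and hence a double limit, which you defer as ``bookkeeping''; it is not purely bookkeeping. For derivative orders $\le m-1$ the scheme survives: the difference quotients of $\rho^{(k)}$ with $k\le m-1$ are uniformly bounded and converge \emph{uniformly} on compacts (mean value theorem plus uniform continuity of $\rho^{(k+1)}$), so composing with inner maps converging in $C^{m-1}$ on compacts and diagonalizing in $\epsilon=\epsilon(n)$ works. For the $W^{m,p}$ upgrade, however, the top-order Fa\`a di Bruno term is of the form $D_n^{(m)}(\iota_{\epsilon(n)}(x))\cdot(\iota_{\epsilon(n)}'(x))^m$, where $D_n^{(m)}$ is the difference quotient of $\rho^{(m)}$; this is neither uniformly bounded nor uniformly convergent, converging to $\rho^{(m+1)}$ only in $L^p$. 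You would then have to show that precomposition with the $n$-dependent near-identity maps preserves this $L^p$ convergence, which requires a continuity-of-composition argument (change of variables plus density of continuous functions in $L^p$) that you have not supplied and that fails in its naive form for $p=\infty$ when $\rho^{(m+1)}$ is merely bounded and discontinuous. The fixed-$J$ device of Lemma \ref{lem:J} removes this obstacle entirely; either adopt it, or replace your $\epsilon$-dependent near-identities by a single fixed realization of the intermediate layers whose image covers the singular point of $\rho^{(m)}$.
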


\begin{proof}
	See Appendix \ref{subapp:m-(m+1)}. Similar to \citep{PRV20}, we construct a sequence of networks whose $\rho$-realizations converge to a target function that is not a $\rho$-realization of some network. In particular, we show order-$(m-1)$ (or order-$m$) Sobolev convergence to a target function that is $(m-1)$-times but not $m$-times differentiable, while \citep{PRV20} show $L^p$ convergence to a discontinuous step function.
\end{proof}

%\ek{Mention which standard activation functions satisfy the hypotheses and/or reference Section 3.1}

Section \ref{subsec:activation} lists several commonly used activation functions and whether they satisfy the assumptions of Theorem \ref{thm:m-(m+1)} for some value of $m$. 

Of course, convergence in order-$(m-1)$ Sobolev norm is stronger than convergence in lower-order Sobolev norm, so $\mathcal{RNN}_\rho^{\Omega}(d,N_1,\dots,N_{L-1},1)$ is not closed in lower-order Sobolev spaces either.

\begin{cor} \label{cor:lower-order}
	Let $d \in \N$ and $p \in [1,\infty]$. Suppose $\rho \in C^m(\R) \setminus C^{m+1}(\R)$ with bounded derivatives up to order $m$. Then $\mathcal{RNN}_\rho^{\Omega}(d,N_1,\dots,N_{L-1},1)$ is \textit{not} closed in $W^{k,p}(\Omega)$ for any $k \in \{0,\dots,m-1\}$, where $\Omega = [-B,B]^d$.
\end{cor}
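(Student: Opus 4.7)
The plan is to invoke Theorem~\ref{thm:m-(m+1)} directly and then exploit the monotonicity of the Sobolev norms in the order parameter. Since Theorem~\ref{thm:m-(m+1)} already produces a sequence of realized networks converging in $W^{m-1,p}(\Omega)$ to a function that is not itself a $\rho$-realization, that very same sequence should automatically witness nonclosedness in every lower-order Sobolev space, with no need to construct anything new.

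Concretely, I would begin by applying Theorem~\ref{thm:m-(m+1)} (whose hypotheses on $\rho$ are implied by those assumed here, since bounded derivatives on $\R$ are in particular locally $p$-integrable and bounded on compact sets) to obtain a sequence $(\Phi_n)_{n \in \N} \subset \mathcal{NN}(d,N_1,\dots,N_{L-1},1)$ and a target function
\[
f \in W^{m-1,p}(\Omega) \setminus \mathcal{RNN}_\rho^{\Omega}(d,N_1,\dots,N_{L-1},1)
\]
with $R_\rho^\Omega(\Phi_n) \to f$ in $W^{m-1,p}(\Omega)$. Then, fixing $k \in \{0,\dots,m-1\}$, I would apply the elementary inequality
\[
\|g\|_{W^{k,p}(\Omega)} = \sum_{|\alpha| \leq k} \|D^\alpha g\|_{L^p(\Omega)} \;\leq\; \sum_{|\alpha| \leq m-1} \|D^\alpha g\|_{L^p(\Omega)} = \|g\|_{W^{m-1,p}(\Omega)}
\]
to $g = R_\rho^\Omega(\Phi_n) - f$, which gives both $f \in W^{k,p}(\Omega)$ and $\|R_\rho^\Omega(\Phi_n) - f\|_{W^{k,p}(\Omega)} \to 0$. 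Since $f \notin \mathcal{RNN}_\rho^{\Omega}(d,N_1,\dots,N_{L-1},1)$ by construction and this set is defined independently of any ambient topology, the sequence $(R_\rho^\Omega(\Phi_n))_{n \in \N}$ lies in the set while its $W^{k,p}$-limit does not.

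No serious obstacle arises here: the argument is essentially the continuous embedding $W^{m-1,p}(\Omega) \hookrightarrow W^{k,p}(\Omega)$ combined with the counterexample already built inside Theorem~\ref{thm:m-(m+1)}. The only conceptual point to highlight is that nonmembership of $f$ in $\mathcal{RNN}_\rho^{\Omega}$ is a property of $f$ as a function, so the same $f$ rules out closedness simultaneously in every $W^{k,p}(\Omega)$ with $k \leq m-1$.
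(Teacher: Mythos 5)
Your argument is correct and is exactly the paper's proof: both invoke the sequence constructed in Theorem~\ref{thm:m-(m+1)} and observe that $W^{m-1,p}(\Omega)$-convergence implies $W^{k,p}(\Omega)$-convergence for $k \leq m-1$, while the limit $f$ remains outside $\mathcal{RNN}_\rho^{\Omega}(d,N_1,\dots,N_{L-1},1)$. The only difference is that you spell out the norm inequality and the hypothesis check explicitly, which the paper leaves implicit.
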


\begin{proof}
	Note that convergence in $W^{m-1,p}(\Omega)$ implies convergence in $W^{k,p}(\Omega)$ for all $k \in \{0,\dots,m-1\}$. So we still have $f_n \to f$ in $W^{k,p}(\Omega)$ in the proof of Theorem \ref{thm:m-(m+1)}, but $f \notin \mathcal{RNN}_\rho^{\Omega}(d,N_1,\dots,N_{L-1},1)$.
\end{proof}

In Theorem \ref{thm:m-(m+1)}, we show that $\mathcal{RNN}_\rho^{\Omega}(d,N_1,\dots,N_{L-1},1)$ is not closed in order-$(m-1)$ Sobolev spaces for $\rho \in C^m(\R)$. For an analytic, bounded, and non-constant activation function $\rho$, we extend this result and prove that $\mathcal{RNN}_\rho^{\Omega}(d,N_1,\dots,N_{L-1},1)$ is not closed in any order Sobolev spaces. 

%\ek{Need to at least add $\rho$ analytic to the hypotheses.}
\begin{theorem} \label{thm:rho_smooth}
	Let $d \in \N$, $p \in [1,\infty]$, and $B>0$. Suppose that $\rho: \R \to \R$ is real analytic, bounded, and not constant, and that all derivatives $\rho^{(n)}$ of $\rho$ are bounded. Then for all possible neural network architectures $(d,N_1,\dots,N_{L-1},1)$ with $L \geq 2$ and $N_{L-1} \geq 2$ and all $k \in \N$, the set $\mathcal{RNN}_\rho^{[-B,B]^d}(d,N_1,\dots,N_{L-1},1)$ is \textit{not} closed in $W^{k,p}([-B,B]^d)$. 
\end{theorem}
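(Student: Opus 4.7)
The strategy mirrors the one used for Theorem \ref{thm:m-(m+1)}: build a sequence $(\Phi_n)_{n \in \N}$ of the prescribed architecture whose $\rho$-realizations converge in $W^{k,p}(\Omega)$ to a limit $f$, and then show $f \notin \mathcal{RNN}_\rho^{\Omega}(d,N_1,\dots,N_{L-1},1)$. Because $\rho \in C^\infty$, I cannot separate the limit from the realization set by a smoothness mismatch as in Theorem \ref{thm:m-(m+1)}; instead I plan to exploit the real analytic rigidity of $\rho$-realizations together with the boundedness of $\rho$ and its derivatives.

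For the construction I would encode a divided difference in the output layer, using $N_{L-1} \geq 2$. Two of the penultimate-layer nodes produce $\rho((1 + 1/n)\langle a,x\rangle + b)$ and $\rho(\langle a,x\rangle + b)$ for a fixed $a \in \R^d \setminus \{0\}$ and $b \in \R$, combined in the final affine map with weights $n$ and $-n$; the remaining hidden nodes are routed to a fixed realizable summand (e.g.\ zero by zeroing their outgoing weights, using earlier layers set so that the relevant affine input reaches layer $L-1$). Taylor expansion together with the boundedness of every derivative $\rho^{(j)}$ yields
\begin{equation*}
    \left| n\bigl[\rho((1+1/n)y) - \rho(y)\bigr] - y\,\rho'(y) \right| \leq \frac{C}{n}
\end{equation*}
uniformly on compacts, and the same $O(1/n)$ bound propagates to every $y$-derivative because each $\rho^{(j)}$ is itself real analytic with bounded derivatives. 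This gives $R_\rho^\Omega(\Phi_n) \to f$ in $W^{k,p}(\Omega)$ for every $k \in \N$, with $f(x) = \langle a,x\rangle\,\rho'(\langle a,x\rangle + b) + g(x)$ and $g$ realizable.

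The main obstacle is proving that $f \notin \mathcal{RNN}_\rho^{\Omega}(d,N_1,\dots,N_{L-1},1)$. I would argue by contradiction: if $f \equiv R_\rho^\Omega(\Psi)$ on $\Omega$, then because both sides extend to real analytic functions on $\R^d$ and agree on a set with non-empty interior, the identity theorem propagates the equality to all of $\R^d$. Since $\rho$ is bounded, every hidden-layer output of a $\rho$-realization is bounded on $\R^d$, so every $\rho$-realization is bounded on $\R^d$ by a constant depending only on its parameters. Choosing $a$ so that $y \mapsto y\,\rho'(y)$ is unbounded on $\R$ gives an immediate contradiction whenever such a direction exists. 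For the remaining activation functions (those whose derivatives decay rapidly, e.g.\ $\tanh$), I would either iterate the divided-difference construction across additional parameters to produce targets of the form $y^k \rho^{(k)}(y)$ for large $k$, or pass to the complex domain and compare the singularity locations and orders in the analytic continuation of $f$ against those available to any realization of the fixed architecture. The principal technical challenge is making this non-realizability step uniform over all real analytic, bounded, non-constant $\rho$ with bounded derivatives; the key bridge from local Sobolev convergence to a global rigidity statement is the analytic-continuation reduction from $\Omega$ to all of $\R^d$ (and then into a strip in $\C^d$), which I expect to be the decisive structural tool.
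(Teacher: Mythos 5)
Your overall architecture of the argument matches the paper's: build a difference quotient in the last two layers, push the convergence up to order $k$ using boundedness of all $\rho^{(j)}$, extend a hypothetical equality $f = R_\rho^\Omega(\Psi)$ from $\Omega$ to all of $\R^d$ by real analyticity, and derive a contradiction from the boundedness of every $\rho$-realization. But there is a genuine gap in the step that actually separates the limit from the realization set. Your difference quotient is taken in the \emph{scaling} parameter at the moving point, so the limit contains the term $\langle a,x\rangle\,\rho'(\langle a,x\rangle+b)$, and your contradiction requires $y \mapsto y\rho'(y)$ to be unbounded. For every standard bounded analytic activation --- sigmoid, $\tanh$, $\arctan$, the inverse square root unit --- $\rho'$ decays at least like $1/y^2$, so $y\rho'(y)$ is bounded and your primary contradiction fails in exactly the cases the theorem is meant to cover. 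You acknowledge this and sketch fallbacks (iterated divided differences, complex singularity analysis), but these are not worked out and it is not clear they close the argument uniformly over all admissible $\rho$. The paper avoids the problem by anchoring the difference quotient at a \emph{fixed} point $z_0$ with $\rho'(z_0) \neq 0$: the two penultimate nodes compute $\rho(x_1)$ and $\rho(x_1/n + z_0)$, combined with output weights $1$ and $n$ and bias $-n\rho(z_0)$, so the realizations converge to $F(x) = \rho(x_1) + \rho'(z_0)\,x_1$. The limit then contains a genuinely linear term with nonzero slope, hence is unbounded for \emph{every} bounded non-constant $\rho$, and the contradiction is immediate.

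A secondary gap: for depth $L > 2$ you assert that earlier layers can be "set so that the relevant affine input reaches layer $L-1$," but with a bounded activation an affine map cannot be propagated exactly through a hidden layer. The paper handles this with a separate lemma showing that networks with analytic bounded activation can approximate the coordinate projections $P_i$ to arbitrary accuracy in $W^{k,p}$, together with a lemma controlling the Sobolev norm of $f_n \circ g_n - f_n \circ h$ when $g_n \to h$ in $W^{k,p}$ and the $f_n$ have uniformly bounded derivatives. Your proposal would need analogues of both to cover general architectures.
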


% Hello! My name is dominique and I have no business writing this paper! :) My boyfriend is Scott and he is very smart. He's going to keep writing this paper. He knows a lot more about this than I do. Thanks for coming to the dominique show! Bye!
% Ha.  (From: Emily)

\begin{proof}
	See Appendix \ref{subapp:rho_smooth}. Using arguments similar to those in \citep{PRV20}, we construct a sequence of networks whose $\rho$-realizations converge in any order Sobolev norm to an unbounded function, which cannot be a $\rho$-realization of some network since $\rho$ is bounded. Lemma \ref{lem:proj} in the proof is interesting in its own right, as it states that realized neural networks with analytic activation functions can approximate the coordinate projection maps to arbitrary accuracy in Sobolev norm. 
\end{proof}

Theorems \ref{thm:m-(m+1)} and \ref{thm:rho_smooth} show that $\mathcal{RNN}_\rho^{[-B,B]^d}(d,N_1,\dots,N_{L-1},1)$ is not closed in Sobolev spaces, although the smoothness of the activation function dictates the order of the Sobolev space in question. In particular, for $\rho \in C^m(\R) \setminus C^{m+1}(\R)$, Theorem \ref{thm:m-(m+1)} shows nonclosedness in order-$(m-1)$ or order-$m$ Sobolev spaces, while Theorem \ref{thm:rho_smooth} gives nonclosedness in all orders of Sobolev spaces for analytic activation functions. %\sm{Since $\rho \in C^\infty(\R \setminus \{0\})$ for most commonly used activation functions (that is, the high-order differentiability typically only fails at 0), we can consider weak derivatives and ask whether $\mathcal{RNN}_\rho^{[-B,B]^d}(d,N_1,\dots,N_{L-1},1)$ may be closed in Sobolev spaces of order greater than $m$. However, the techniques used in the proof of Theorem \ref{thm:m-(m+1)} no longer apply since the derivatives of order $m+1$ and higher are not continuous. Even so, we speculate that the nonclosedness of $\mathcal{RNN}_\rho^{[-B,B]^d}(d,N_1,\dots,N_{L-1},1)$ still holds in Sobolev spaces of order greater than $m$. (MAYBE REMOVE THIS, BECAUSE WE DON'T KNOW WHETHER THESE HIGHER ORDER WEAK DERIVATIVES EXIST)} 
Both results apply to the $p=\infty$ case, indicating that sets of realized neural networks are not closed with respect to uniform convergence of a network and its derivatives.

\subsection{Closedness of Sets of Networks with Bounded Weights} \label{subsec:closedness}

% \ac{Here should have more of a leadin about how to fix nonclosedness but how this constrains the set of functions you can learn with a network}

The nonclosedness of sets of realized neural networks is undesirable if we only want to learn network target functions or if we want to prevent unbounded growth of network parameters. If we desire closedness, then we must modify the set of neural networks under consideration in some way. However, requiring closedness will necessarily constrain the set of target functions that we can approximate. 

Modifications to enforce closedness of sets of realized neural networks may include relaxing some assumptions on the activation function or placing restrictions on the network parameters. In this section, we discuss the closedness of sets of realized neural networks whose parameters are all bounded by the same constant. We define a norm on $\mathcal{NN}(d,N_1,\dots,N_L)$ and sets of realized neural networks with bounded norm.

\begin{defn}
	\citep{PRV20} Let $C>0$. Define
	\begin{equation*}
		\mathcal{NN}^C(d,N_1,\dots,N_L) = \{ \Phi \in \mathcal{NN}(d,N_1,\dots,N_L) : \|\Phi\|_{total} \leq C \},
	\end{equation*}
	as a set of neural networks with \textbf{uniformly bounded} weights, where
	\begin{equation*}
		\|\Phi\|_{total} = \max_{\ell=1,\dots,L} \|A_\ell\|_{max} + \max_{\ell=1,\dots,L} \|b_\ell\|_{max} 
	\end{equation*}
	and $\|\cdot\|_{max}$ equals the absolute value of the entry of largest magnitude from a matrix or vector. For $\Omega \subset \R^d$ and $\rho : \R \to \R$, also define $$\mathcal{RNN}_\rho^{\Omega,C}(d,N_1,\dots,N_L) \defeq R_\rho^\Omega\big(\mathcal{NN}^C(d,N_1,\dots,N_L)\big)$$
	as a set of realized neural networks with uniformly bounded weights and biases. 
\end{defn}

Petersen, Raslan, and Voigtlaender show that $\mathcal{RNN}_\rho^{[-B,B]^d}(d,N_1,\dots,N_{L-1},1)$ is not closed in $L^p([-B,B]^d)$ or in $C([-B,B]^d)$ with respect to the $L^\infty$ norm. However, sets of realized neural networks with uniformly bounded parameters \textit{are} closed (in fact, compact) in these spaces by Proposition 3.5 in \citep{PRV20}.

\iffalse
\begin{prop}
	\citep{PRV20} Let $\Omega \subset \R^d$ be compact, $C>0$, $p \in (0,\infty)$, and $\rho: \R \to \R$ be continuous. Then the set $\mathcal{RNN}_\rho^{\Omega,C}(d,N_1,\dots,N_L)$ of realized neural networks with uniformly bounded weights is compact (and hence closed) in $L^p(\Omega)$ and $C(\Omega)$ (with respect to the $L^\infty$ norm) for any architecture $(d,N_1,\dots,N_L)$.
\end{prop}

\begin{proof}
	See Proposition 3.5 in \citep{PRV20}. The compactness of $\mathcal{NN}^C(d,N_1,\dots,N_L)$ follows from the Heine-Borel Theorem. Since the realization map $$R_\rho^\Omega : \mathcal{NN}(d,N_1,\dots,N_L) \to C(\Omega)$$ is continuous (also shown in \citep{PRV20}), the image
	\begin{equation*}
		\mathcal{RNN}_\rho^{\Omega,C}(d,N_1,\dots,N_L) = R_\rho^\Omega\big(\mathcal{NN}^C(d,N_1,\dots,N_L)\big)
	\end{equation*}
	is compact in $C(\Omega)$. Since $\Omega$ is compact, $C(\Omega)$ is continuously embedded in $L^p(\Omega)$ for any $p \in (0,\infty)$, and thus $\mathcal{RNN}_\rho^{\Omega,C}(d,N_1,\dots,N_L)$ is compact in $L^p(\Omega)$ as well.
\end{proof}
\fi

Since Sobolev convergence is stronger than $L^p$ convergence, $\mathcal{RNN}_\rho^{\Omega,C}(d,N_1,\dots,N_L)$ is also closed in Sobolev spaces.

\begin{cor} \label{cor:bounded_weights}
	Let $\Omega \subset \R^d$ be compact, $C>0$, $p \in [1,\infty]$, $k \in \N$, and $\rho: \R \to \R$ be continuous. Then $\mathcal{RNN}_\rho^{\Omega,C}(d,N_1,\dots,N_L)$ is closed in $W^{k,p}(\Omega)$.
\end{cor}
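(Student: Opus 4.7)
The plan is to reduce directly to the $L^p$ (and $C(\Omega)$) closedness assertion of Proposition~3.5 in \citep{PRV20} by exploiting the fact that $W^{k,p}(\Omega)$ embeds continuously into $L^p(\Omega)$. Because the Sobolev norm contains the $L^p$ norm as its zero-order summand, convergence in Sobolev norm is strictly stronger than convergence in $L^p$, so any $W^{k,p}$-convergent sequence of realized networks is automatically $L^p$-convergent to the same limit, and the prior closedness result finishes the job.

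Concretely, I would start with an arbitrary sequence $(f_n)_{n\in\N} \subset \mathcal{RNN}_\rho^{\Omega,C}(d,N_1,\dots,N_L)$ converging to some $f$ in $W^{k,p}(\Omega)$. Applying the inequality
\[
\|g\|_{L^p(\Omega)} \;=\; \|D^0 g\|_{L^p(\Omega)} \;\leq\; \sum_{|\alpha|\leq k} \|D^\alpha g\|_{L^p(\Omega)} \;=\; \|g\|_{W^{k,p}(\Omega)}
\]
to $g = f_n - f$ yields $f_n \to f$ in $L^p(\Omega)$. For $p \in [1,\infty)$, the $L^p$-closedness part of Proposition~3.5 in \citep{PRV20} immediately gives $f \in \mathcal{RNN}_\rho^{\Omega,C}(d,N_1,\dots,N_L)$.

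For the endpoint $p = \infty$, I would instead invoke the $C(\Omega)$ portion of Proposition~3.5 in \citep{PRV20}: since $\rho$ is continuous and the parameters of each $f_n$ are uniformly bounded on the compact set $\Omega$, each realization $f_n$ lies in $C(\Omega)$, and $W^{k,\infty}$-convergence forces $L^\infty$-convergence, hence uniform convergence on $\Omega$. Closedness of $\mathcal{RNN}_\rho^{\Omega,C}$ in $C(\Omega)$ with the sup-norm then forces $f$ to be the realization of some uniformly bounded network. There is essentially no substantive obstacle here: the corollary is a very short consequence of the definition of the Sobolev norm together with the already-established $L^p$ and $C(\Omega)$ closedness. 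The only item that requires care is the $p = \infty$ case, where one must apply the $C(\Omega)$ statement rather than attempting to use $L^\infty$ directly, but this is purely a matter of correctly quoting the prior result.
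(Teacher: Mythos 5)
Your proof is correct and takes essentially the same route as the paper: both reduce $W^{k,p}$-closedness to the $L^p$ (resp.\ $C(\Omega)$) closedness of Proposition~3.5 in \citep{PRV20} via the observation that the $L^p$ norm is dominated by the $W^{k,p}$ norm. Your extra care with the $p=\infty$ endpoint is a reasonable refinement but does not change the argument.
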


\begin{proof}
	If $(f_n)_{n \in \N} \subset \mathcal{RNN}_\rho^{\Omega,C}(d,N_1,\dots,N_L)$ satisfies $\|f_n-f\|_{W^{k,p}(\Omega)} \to 0$ for some $f$, then $f_n \to f$ in $L^p$ norm. Thus, $f \in \mathcal{RNN}_\rho^{\Omega,C}(d,N_1,\dots,N_L)$ because this set is closed in $L^p(\Omega)$ by Proposition 3.5 in \citep{PRV20}.
\end{proof}

The nonclosedness of $\mathcal{RNN}_\rho^{\Omega}$ in Sobolev spaces has significant consequences for approximating functions using neural networks. Indeed, it says that for any architecture $S=(d,N_1,\dots,N_{L-1},1)$ with $L \geq 2$ and $N_{L-1} \geq 2$, there is a non-network target function $f \in \overline{\mathcal{RNN}_\rho^{\Omega}(S)} \setminus \mathcal{RNN}_\rho^{\Omega}(S)$, where the closure can be taken with respect to Sobolev norm of the appropriate order. Combined with the closedness of the set $\mathcal{RNN}_\rho^{\Omega,C}(S)$ with uniformly bounded weights, this means that if $\|R_\rho(\Omega)(\Phi_n) - f\|_{W^{k,p}(\Omega)} \to 0$ for some sequence of networks $(\Phi_n)_{n \in \N}$ with architecture $S$, then $\|\Phi_n\|_{total} \to \infty$. This may explain the phenomenon of weights growing without bound that sometimes occurs when training neural networks, and it indicates that using Sobolev training for neural networks can still lead to such a growth of parameters for some target functions.

\subsection{Network Parameter Growth Rates} \label{subsec:rates}

We have seen that convergence of a neural network to a non-network target function requires infinite growth of at least one network weight, but we would like to know how quickly the weights must grow relative to the decreasing approximation error. An interesting area of further research would be to describe this relationship for fixed architectures in full generality based on characteristics of the activation function, target function, and network architecture. We describe the approximation error decay for most activation functions in the case of a certain sequence of networks learning the derivative of the activation function.

\begin{prop} \label{prop:rates}
    Let $\rho \in C^m(\R)$ for some $m \geq 2$, the softsign function, or ELU (cf.\ Table \ref{table:activation}). Let $(h_n)_{n=1}^\infty = (R_\rho^\R(\Phi_n))_{n=1}^\infty$ be the sequence of realized neural networks in $\mathcal{RNN}(1,2,1)$ from the proof of Theorem \ref{thm:m-(m+1)}. Further let $p \in [1,\infty]$ and let $\Omega \subset \R$ be a compact, measurable set with nonempty interior. Then $\|h_n - \rho'\|_{L^p(\Omega)} \leq C_p/\|\Phi_n\|_{total}$ for a constant $C_p$ depending on $p$ but not $n$.
\end{prop}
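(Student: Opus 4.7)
The plan is to unwind the explicit construction of $(\Phi_n)$ from the proof of Theorem~\ref{thm:m-(m+1)}, compute $\|\Phi_n\|_{\mathrm{total}}$ by inspection, and then bound the pointwise error $|h_n(x) - \rho'(x)|$ via the fundamental theorem of calculus combined with a local Lipschitz property of $\rho'$. The $(1,2,1)$-sequence arising in that proof is the forward-difference network
\[
h_n(x) \;=\; n\bigl[\rho(x + 1/n) - \rho(x)\bigr],
\]
realized with $A_1 = (1,1)^\top$, $b_1 = (1/n,0)^\top$, $A_2 = (n,-n)$, and $b_2 = 0$. A direct computation gives $\|\Phi_n\|_{\mathrm{total}} = n + 1/n$, which for $n \geq 1$ lies in the interval $[n, 2n]$. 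Consequently, it suffices to prove a bound of the form $\|h_n - \rho'\|_{L^p(\Omega)} \leq \widetilde{C}_p / n$, after which the claim follows from $1/n \leq 2/\|\Phi_n\|_{\mathrm{total}}$.

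For the error bound itself, I would apply the fundamental theorem of calculus to write
\[
h_n(x) - \rho'(x) \;=\; n\int_0^{1/n}\!\bigl(\rho'(x+s) - \rho'(x)\bigr)\, ds,
\]
and observe that if $\rho'$ is Lipschitz with constant $L$ on the enlarged set $\Omega' \defeq \{y + s : y \in \Omega,\ s \in [0,1]\}$, then the integrand is pointwise bounded by $Ls$, giving $|h_n(x) - \rho'(x)| \leq L/(2n)$ uniformly in $x \in \Omega$. Integrating over $\Omega$ (or taking the essential supremum when $p = \infty$) yields $\|h_n - \rho'\|_{L^p(\Omega)} \leq L\,|\Omega|^{1/p}/(2n)$, which is precisely the desired inequality with $C_p = L\,|\Omega|^{1/p}$.

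What remains is to check, case by case, that each listed activation has a locally Lipschitz first derivative. For $\rho \in C^m(\R)$ with $m \geq 2$, continuity of $\rho''$ gives boundedness on any compact superset of $\Omega$, hence a Lipschitz bound for $\rho'$. For softsign $\rho(x) = x/(1+|x|)$, a direct computation gives $\rho'(x) = (1+|x|)^{-2}$, whose weak derivative has absolute value at most $2$ on $\R$, so $\rho'$ is globally $2$-Lipschitz. For ELU, $\rho'$ is the continuous concatenation of $\alpha e^x$ on $(-\infty,0]$ with the constant $1$ on $[0,\infty)$, and is therefore Lipschitz on every compact interval. The only real (and rather minor) obstacle is ensuring that the nonsmooth activations do not force a $C^2$ hypothesis: the key conceptual move is to phrase the bound through the Lipschitz constant of $\rho'$ rather than through $\|\rho''\|_\infty$, after which the three cases are handled uniformly.
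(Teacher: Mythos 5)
Your proof is correct, but it takes a genuinely different and more unified route than the paper. The paper handles the three cases separately: for $\rho \in C^m$ it uses a first-order Taylor remainder $\tfrac{1}{2n}|\rho''(\xi)|$; for softsign it computes $|h_n - \rho'|$ in closed form on three subintervals and bounds a cubic polynomial $q(x) = 2nx^3 + 2(1-n)x^2 - 4x - (1/n)$ over $[-1/n,0]$ by locating its critical point; for ELU it again splits into three regions with an exact cancellation on $x \geq 0$ and a Taylor estimate elsewhere. You instead write the forward difference as
\begin{equation*}
h_n(x) - \rho'(x) = n\int_0^{1/n}\bigl(\rho'(x+s)-\rho'(x)\bigr)\,ds
\end{equation*}
and invoke only that $\rho'$ is Lipschitz on a compact enlargement of $\Omega$, which all three activations satisfy (with global constants $2$ and $1$ for softsign and ELU). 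This collapses the paper's longest computation into one line and shows the result really only depends on $\rho$ being $C^1$ with locally Lipschitz derivative, so it extends verbatim to any such activation. What you give up is what the paper's explicit algebra buys: concrete numerical constants, and---for softsign and ELU---bounds on $\|h_n-\rho'\|_{L^p(\R)}$ over the whole line rather than just over $\Omega$ (your uniform pointwise bound $L/(2n)$ cannot be integrated over $\R$, whereas the paper exploits the decay of the error at infinity). Since the proposition only asserts an $L^p(\Omega)$ bound with $\Omega$ compact, and the paper's own constant for the $C^m$ case already depends on $\Omega$, your argument fully establishes the stated claim.
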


\begin{proof}
    See Appendix \ref{subapp:rates}.
\end{proof}

In other words, Proposition \ref{prop:rates} shows that the $L^p$ approximation error is approximately inversely proportional to the networks' total norms for a sequence of networks learning the activation function derivative. Note that when the activation function is not analytic, its derivative is a non-network target function.

\subsection{Commonly Used Activation Functions} \label{subsec:activation}

Though some of the assumptions on the activation function required by Theorems \ref{thm:m-(m+1)} and \ref{thm:rho_smooth} seem strong, they are satisfied by many commonly used activation functions. Thus, the results of these theorems apply, and sets of neural networks are not closed in various orders of Sobolev spaces for $\Omega$ compact and $p \geq 1$. Table \ref{table:activation} summarizes which results apply to several common activation functions.

%\ek{Add $a >0$?}  
%\ek{How is the derivative of ReLU in $L^p$?  Should also clarify which $p$ since the paper from Philipp includes $p \in (0,1)$, but clearly not all of these functions are in those spaces.}
%\ek{Add reference to Philipp's paper to the fact that ReLU networks aren't closed in $L^p(\Omega)$}
%\ek{To fix the proof at the end, replace the $C^\infty(\mathbb{R})$ with the stronger ``real analytic'' and perhaps add a lemma proving that they are all real analytic}

%\centering
\begin{longtable}{|l|l|l|l|}  \hline
		\multirow{2}{*}{\textbf{Name}} & \multirow{2}{*}{$\rho(x)$} & \textbf{Smoothness/} & $\mathcal{RNN}_\rho^\Omega \Big.$ \\
		 & & \textbf{Boundedness} & \textbf{not closed in} \\ \hline \endfirsthead
		
		%\hline \endhead
		
		%\hline \endfoot
		
		%\hline \endlastfoot 
		  
		Rectified Linear & \multirow{2}{*}{$\max\{0,x\}$} & $\Big. C(\R)$, abs. cont., & \multirow{2}{*}{$W^{0,p}(\Omega)$ [1]} \\ 
		Unit (ReLU) & & $\rho' \in L^p(\Omega)$ & \\ \hline 
		Exponential Linear & \multirow{2}{*}{$x \cdot \chi_{x \geq 0} + (e^x-1) \cdot \chi_{x < 0}$} & $\Big. C^1(\R)$, $\rho'$ abs. cont., & \multirow{2}{*}{$W^{1,p}(\Omega)$}  \\
		Unit (ELU) & & $\rho'' \in L^p(\Omega)$ & \\ \hline 
		\multirow{2}{*}{Softsign} & \multirow{2}{*}{$\frac{x}{1+|x|}$} & $\Big. C^1(\R)$, $\rho'$ abs. cont., & \multirow{2}{*}{$W^{1,p}(\Omega)$} \\
		 & & $\rho'' \in L^p(\Omega)$ & \\ \hline 
		Inverse Square Root & \multirow{2}{*}{$x \cdot \chi_{x \geq 0} + \frac{x}{\sqrt{1+ax^2}} \cdot \chi_{x < 0}$} & $\Big. C^2(\R)$, $\rho''$ abs. cont., & \multirow{2}{*}{$W^{2,p}(\Omega)$}  \\
		Linear Unit ($a > 0$) & & $\rho''' \in L^p(\Omega)$ & \\ \hline
		Inverse Square Root & \multirow{2}{*}{$\frac{x}{\sqrt{1+ax^2}}$} & real analytic, all $\Big.$ & \multirow{2}{*}{$W^{k,p}(\Omega)$ for all $k$}  \\
		Unit ($a > 0$) & &  derivatives bounded & \\ \hline
		\multirow{2}{*}{Sigmoid} & \multirow{2}{*}{$\frac{1}{1+e^{-x}}$} & real analytic, all $\Big.$ & \multirow{2}{*}{$W^{k,p}(\Omega)$ for all $k$}  \\
		 & &  derivatives bounded [2] &\\ \hline
		\multirow{2}{*}{tanh} & \multirow{2}{*}{$\frac{e^x - e^{-x}}{e^x + e^{-x}}$} & real analytic, all $\Big.$ & \multirow{2}{*}{$W^{k,p}(\Omega)$ for all $k$}  \\
		 & & derivatives bounded [2] &\\ \hline
		\multirow{2}{*}{arctan} & \multirow{2}{*}{$\arctan(x)$} & real analytic, all  $\Big.$ & \multirow{2}{*}{$W^{k,p}(\Omega)$ for all $k$}  \\
		 & & derivatives bounded [3] &\\ \hline
		 
		\captionsetup{width=0.85\linewidth}
		\caption{Many activation functions used in practice satisfy some smoothness and boundedness properties so that Theorems \ref{thm:m-(m+1)} and \ref{thm:rho_smooth} apply. Thus, $\mathcal{RNN}_\rho^\Omega$ is not closed in various orders of Sobolev spaces for $\Omega$ compact and $p \in [1,\infty]$. Some results in the table are found in the following references: [1] \citep{PRV20} [2] \citep{MW93} [3] \citep{AL10}.} \label{table:activation} 
\end{longtable}

The smoothness and boundedness assumptions for the ReLU, exponential linear unit, softsign, and inverse square root linear unit can be checked by hand. The real analyticity of the other activation functions is established by properties from \citep{KP02}.

\iffalse
\begin{rk}
    The inverse square root unit, sigmoid, tanh, and arctan activation functions in Table \ref{table:activation} are real analytic.
\end{rk}

\begin{proof}
    See Appendix \ref{subapp:activation}.
\end{proof}
\fi

For these activation functions, sets of realized neural networks are not closed in Sobolev spaces of certain orders. Thus, using Sobolev training still allows these networks to learn non-network target functions, although doing so will cause unbounded growth of parameters, which follows from Corollary \ref{cor:bounded_weights}.

\section{Experimental Results} \label{sec:experiments}

% \ac{This should have a stronger intro to mirror Emily's comments from last week.  Emphasize that this shows that not only is there a sequence of networks that escapes the set of networks, but training actually finds that path.}

We now show some experimental results that demonstrate the nonclosedness of sets of realized neural networks in Sobolev spaces and examine the rate of parameter growth. Specifically, we use Sobolev training with an Adam optimizer \citep{KB17} to produce sequences of neural networks that approximate non-network target functions. 

For activation functions that are $m$-times but not $(m+1)$-times differentiable, we know from the proof of Theorem \ref{thm:m-(m+1)} that there is a sequence of networks that converges in Sobolev norm to the derivative of the activation function. The derivative is not a realized neural network because it is only $(m-1)$-times differentiable. With this as motivation, each trial of our experiment trains a network to learn a randomly generated $(m-1)$-times differentiable target function in Sobolev norm. We consistently observe a rapidly decreasing approximation error and a fairly steady growth of network weights.

\begin{figure}[ht]
	\centering
	\begin{subfigure}[b]{0.32\linewidth}
	    \centering
	    \includegraphics[scale=0.33]{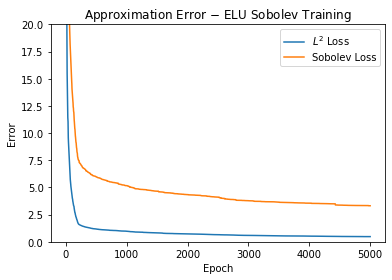}
	    \caption{Training error. \label{subcap:ELU_Loss_S}}
	\end{subfigure}
	\begin{subfigure}[b]{0.32\linewidth}
	    \centering
	    \includegraphics[scale=0.33]{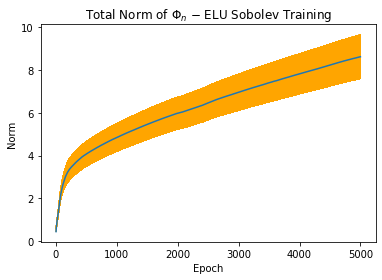}
	    \caption{The network norm. \label{subcap:ELU_Norm_S}}
	\end{subfigure}
	\begin{subfigure}[b]{0.32\linewidth}
	    \centering
	    \includegraphics[scale=0.33]{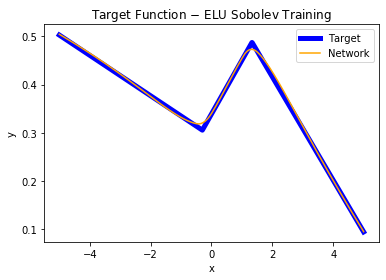}
	    \caption{The target function. \label{subcap:ELU_Target_S}}
	\end{subfigure}

	\captionsetup{width=0.85\linewidth}
	\caption{For each of 100 target functions $f$, we train an ELU network $\Phi$ in $\mathcal{NN}(1,10,1)$ to minimize $\|R_\rho^{[-5,5]}(\Phi)-f\|_{W^{1,2}}$. (\subref{subcap:ELU_Loss_S}) The best $W^{1,2}$ training loss achieved thus far is plotted at each epoch and averaged over all 100 experiments. (\subref{subcap:ELU_Norm_S}) The total network norm is averaged over all 100 experiments with 95\% confidence bands. (\subref{subcap:ELU_Target_S}) An example target function is plotted along with the realized neural network after training.}\label{cap:ELU_S}
\end{figure}

Figure \ref{cap:ELU_S} shows the results of 100 trails of ELU networks learning non-network target functions in Sobolev norm. Since ELU is $C^1$ but not $C^2$, we train the networks to learn randomly generated piecewise linear ($C^0$ but not $C^1$) functions in order-1 Sobolev space. We see that the $L^2$ and Sobolev approximation errors decrease rather quickly, while the total network norm increases quickly at first and then at a fairly steady rate.

These results are consistent with Theorem \ref{thm:m-(m+1)}. We see that networks are able to closely approximate non-network target functions, which is evidence of the nonclosedness of sets of realized neural networks in Sobolev spaces. Appendix \ref{app:exp} provides similar results for the ISRLU activation function. Since ISRLU is $C^2$ and not $C^3$, we train these networks to learn randomly generated piecewise quadratic target functions in order-2 Sobolev norm.

For real analytic activation functions, Lemma \ref{lem:proj} indicates that realized neural networks can approximate coordinate projection maps to arbitrary accuracy. We trained a sigmoid neural network to learn the projection map $P_1: \R^2 \to \R$ given by $P_1(x_1,x_2)=x_1$ in order-2 Sobolev norm. We reset the network weights and repeat the training process 100 times to assess whether sigmoid networks consistently learn $P_1$, which is a non-network function.

\begin{figure}[ht]
	\centering
	\begin{subfigure}[b]{0.32\linewidth}
	    \centering
	    \includegraphics[scale=0.33]{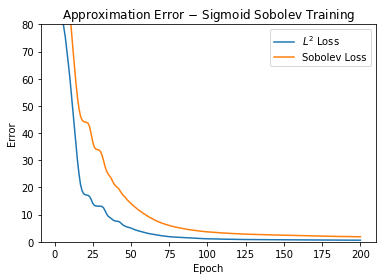}
	    \caption{Training error. \label{subcap:Sig_Loss_S}}
	\end{subfigure}
	\begin{subfigure}[b]{0.32\linewidth}
	    \centering
	    \includegraphics[scale=0.33]{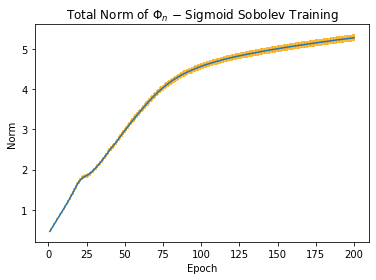}
	    \caption{The network norm. \label{subcap:Sig_Norm_S}}
	\end{subfigure}
	\begin{subfigure}[b]{0.32\linewidth}
	    \centering
	    \includegraphics[scale=0.42]{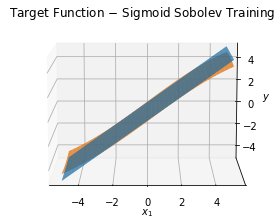}
	    \caption{The target function. \label{subcap:Sig_Target_S}}
	\end{subfigure}

	\captionsetup{width=0.85\linewidth}
	\caption{In 100 repetitions, we train a sigmoid network $\Phi$ in $\mathcal{NN}(1,10,1)$ to minimize $\|R_\rho^{[-5,5]^2}(\Phi)-P_1\|_{W^{2,2}}$. (\subref{subcap:Sig_Loss_S}) The best $W^{2,2}$ training loss achieved thus far is plotted at each epoch and averaged over all 100 experiments. (\subref{subcap:Sig_Norm_S}) The total network norm is averaged over all 100 experiments with 95\% confidence bands. (\subref{subcap:Sig_Target_S}) The target function (blue) is plotted along with the realized neural network (orange) after training.}\label{cap:Sig_S}
\end{figure}

Figure \ref{cap:Sig_S} shows the results of 100 trials of sigmoid networks learning $P_1$ in order-2 Sobolev norm. Note that $P_1$ is a non-network target function because it is unbounded, as discussed in the proof of Theorem \ref{thm:rho_smooth}. As in Figure \ref{cap:ELU_S}, we see that the $L^2$ and Sobolev approximation errors decrease rather quickly, while the total network norm increases at a fairly steady rate.

These results are consistent with Theorem \ref{thm:rho_smooth}. We see that networks with analytic activation functions are able to closely approximate non-network target functions, which is further evidence of the nonclosedness of sets of realized neural networks in Sobolev spaces. Our experiments were done with shallow networks, but we expect the same nonclosedness to be demonstrated with deep networks since they are necessarily more expressive.

\subsection{Network Parameter Growth Rates} \label{subsec:rates_exp}

The results in Figures \ref{cap:ELU_S} and \ref{cap:Sig_S} show that realized neural networks are able to approximate non-network target functions in Sobolev norm with growing parameters, but the relationship between the approximation error and parameter growth is not clear. In this section, we provide numerical evidence for one case of Proposition \ref{prop:rates} by showing that $L^2$ approximation error and the network's total norm are approximately inversely proportional for a softsign network learning the softsign derivative. We analyze the relationship by making a scatter plot of approximation error against the total norm. We do the same for the Sobolev error, even though we do not have a theoretical result for this case.

\begin{figure}[ht]
	\centering
	\begin{subfigure}[b]{0.49\linewidth}
	    \centering
	    \includegraphics[width=0.9\linewidth]{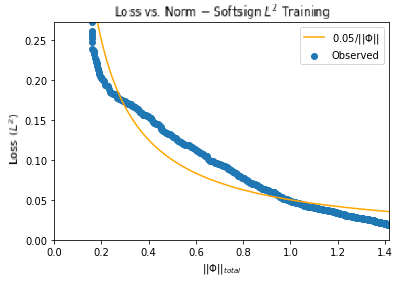}
	    \caption{$L^2$ error vs.\ network norm. \label{subcap:rates_N}}
	\end{subfigure}
	\begin{subfigure}[b]{0.49\linewidth}
	    \centering
	    \includegraphics[width=0.89\linewidth]{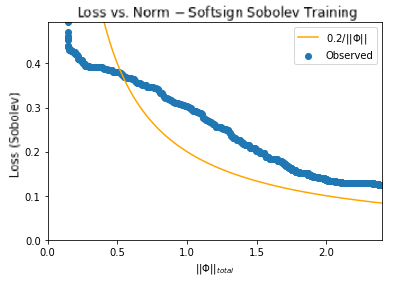}
	    \caption{Sobolev error vs.\ network norm. \label{subcap:rates_S}}
	\end{subfigure}

	\captionsetup{width=0.85\linewidth}
	\caption{We train a softsign network $\Phi$ in $\mathcal{NN}(1,2,1)$ to learn the softsign derivative and scatter approximation error against the total norm of $\Phi$. (\subref{subcap:rates_N}) Training error vs.\ network norm using $L^2$ training. (\subref{subcap:rates_S}) Training error vs.\ network norm using order-1 Sobolev training.}\label{cap:rates}
\end{figure}

Figure \ref{cap:rates}(\subref{subcap:rates_N}) shows that the relationship between $L^2$ approximation error and the total norm is approximately inversely proportional, as stated in Proposition \ref{prop:rates}. Note that our networks are trained using an Adam optimizer \citep{KB17} and thus do not follow the sequence $(h_n)_{n=1}^\infty$ from Proposition \ref{prop:rates}, but we still observe the expected relationship between approximation error and weights. The relationship for Sobolev error in Figure \ref{cap:rates}(\subref{subcap:rates_S}) is not as clear. An interesting question of further research is how the relationship between error and norm generalizes to other target functions, sequences of networks, network architectures, and higher-order Sobolev norms.

\subsection*{Declaration of Competing Interest}

The authors declare that they have no known competing financial interests or personal relationships that could have appeared to influence the work reported in this paper.

\subsection*{Acknowledgements}

This material is based upon work supported by the National Science Foundation Graduate Research Fellowship Program under Grant No.\ DGE-1650112. Any opinions, findings, and conclusions or recommendations expressed in this material are those of the author(s) and do not necessarily reflect the views of the National Science Foundation.

SM is funded by grant NSF DGE GRFP \#1650112.
AC is funded by grants NSF DMS \#1819222, \#2012266, and Russell Sage Foundation grant 2196.

%\ek{Double check that the arXiv papers haven't since been published.  Also, the formatting isn't consistant.  It would probably be best to use BibTeX.  Many of the references are already in the bib file NoclosedNN.bib in the Dropbox folder.}

\bibliography{NoclosedNN}{}
\bibliographystyle{apalike}

\appendix

\section{Proofs of Results from Section \ref{sec:nonclosedness}} \label{app:proofs}

\subsection{Proof of Theorem \ref{thm:m-(m+1)}} \label{subapp:m-(m+1)}

The proof of Theorem \ref{thm:m-(m+1)} uses the following lemma.

\begin{lem} \label{lem:J}
	Let $d \in \N$, $L \geq 2$, $B>0$, and $D>0$. Set $\Omega = [-B,B]^d$. Suppose $\rho \in C^m(\R) \setminus C^{m+1}(\R)$ for some $m \in \N$. Then there exists $\Phi \in \mathcal{NN}(d,1,\dots,1)$ with $L-1$ layers such that $R_\rho^\Omega(\Phi)(\Omega) \supset [-D,D]$.
\end{lem}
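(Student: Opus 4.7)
The plan is to exploit only two features of $\rho$: it is continuous (since $\rho\in C^m\subset C^0$) and it is not constant (every constant function lies in $C^{m+1}$, so $\rho\notin C^{m+1}$ forbids this). Given non-constancy, pick $a<b$ with $\rho(a)\neq\rho(b)$, and set $\alpha=\min\{\rho(a),\rho(b)\}$, $\beta=\max\{\rho(a),\rho(b)\}$, so $\alpha<\beta$. The intermediate value theorem applied to $\rho$ on $[a,b]$ then gives $\rho([a,b])\supset [\alpha,\beta]$.

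With $a,b,\alpha,\beta$ fixed, I would construct $\Phi\in\mathcal{NN}(d,1,\ldots,1)$ with $L-1$ layers by prescribing each affine map $W_\ell(y)=A_\ell y+b_\ell$ explicitly. Take $W_1:\R^d\to\R$ to be $W_1(x)=\tfrac{b-a}{2B}x_1+\tfrac{a+b}{2}$, so that $W_1(\Omega)=[a,b]$. For each intermediate index $2\leq\ell\leq L-2$ (vacuous when $L\leq 3$), take the scalar affine map $W_\ell(y)=\tfrac{b-a}{\beta-\alpha}(y-\alpha)+a$, which sends $[\alpha,\beta]$ onto $[a,b]$. Finally, choose $W_{L-1}(y)=\tfrac{2D}{\beta-\alpha}(y-\alpha)-D$, which sends $[\alpha,\beta]$ onto $[-D,D]$.

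Correctness then follows from a short induction using two monotonicity facts for set images: (i) if a set $S\supset[a,b]$, then $\rho(S)\supset[\alpha,\beta]$ by the IVT observation above; and (ii) if a set $T\supset[\alpha,\beta]$, then $W_\ell(T)\supset W_\ell([\alpha,\beta])=[a,b]$ for any intermediate $\ell\in\{2,\ldots,L-2\}$, simply because $W_\ell$ maps $[\alpha,\beta]$ bijectively onto $[a,b]$. Starting from $W_1(\Omega)=[a,b]$ and alternately applying $\rho$ and $W_\ell$ shows that the image after the $\ell$-th affine map contains $[a,b]$ for every $1\leq\ell\leq L-2$; one further application of $\rho$ yields a set containing $[\alpha,\beta]$, and then $W_{L-1}$ maps it to a set containing $W_{L-1}([\alpha,\beta])=[-D,D]$, as required.

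The main obstacle is not mathematical but notational: the edge cases $L=2$ and $L=3$ must be handled so the statement of the induction is uniform. When $L=2$ there are no $\rho$-applications at all, and one instead simply takes $W_1(x)=(D/B)x_1$ so that $R_\rho^{\Omega}(\Phi)(\Omega)=W_1(\Omega)=[-D,D]$ directly. When $L=3$ the intermediate block is absent and the construction reduces to $W_1$, one application of $\rho$, and $W_{L-1}=W_2$. Apart from choosing the indices and base cases so that all $L\geq 2$ are covered, the argument is otherwise routine, and crucially uses only the non-constancy and continuity of $\rho$ rather than any of its smoothness.
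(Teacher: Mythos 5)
Your proof is correct, and while the overall strategy (propagate a fixed interval through the layers, then stretch onto $[-D,D]$ at the end) matches the paper's, your realization of it is genuinely more elementary in two respects. First, the paper extracts an open interval $U$ with $0\notin\rho'(U)$ and concludes that $\rho(U)$ has nonempty interior, so it leans on $\rho\in C^1$; you instead get an interval $[\alpha,\beta]\subset\rho([a,b])$ directly from the intermediate value theorem, using only continuity and non-constancy. Second, the paper's intermediate layers are pure scalings $(A_j,0)$ chosen ``iteratively'' so that each image again covers $U$, which is slightly delicate when neither $U$ nor $\rho(U)$ straddles the origin (a scaling of an interval bounded away from zero has a fixed endpoint ratio, so one must shrink $U$ appropriately for the covering to be possible); your use of explicit affine maps with bias terms, sending $[\alpha,\beta]$ bijectively onto $[a,b]$ at every intermediate layer, makes this step trivial and fully rigorous. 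You also treat the degenerate cases $L=2$ and $L=3$ explicitly, where the paper's indexing ($A_1$ through $A_{L-2}$ followed by $A_{L-1}$) silently collapses. The only thing the paper's approach buys in exchange is that it matches the $0\notin\rho'(U)$ setup used elsewhere in their arguments; for this lemma in isolation, your version is cleaner.
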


\begin{proof}[Proof of Lemma \ref{lem:J}]
	Since $\rho \in C^m(\R) \setminus C^{m+1}(\R)$ for some $m \in \N$, $\rho$ is not a constant function. Moreover, since $\rho \in C^1(\R)$, there exists an open interval $U \subset \R$ such that $0 \notin \rho'(U)$. Hence, $\rho(U)$ is not a single point and must be an interval with non-empty interior. Next, choose $A_1 \in \R^{1 \times d}$ such that $$A_1 \Omega \supset U.$$ Since $U$ has non-empty interior, we can iteratively choose $A_j \in \R^{1 \times 1}$ such that
	\begin{equation*}
		A_j \rho (A_{j-1} \rho(\cdots \rho(A_1 \Omega) \cdots ) ) \supset U
	\end{equation*}
	for $j=2,\dots,L-2$. Again since $U$ has non-empty interior, we can choose $A_{L-1} \in \R^{1 \times 1}$ and $b_{L-1} \in \R$ such that
	\begin{equation*}
		A_{L-1} \rho (A_{L-2} \rho(\cdots \rho(A_1 \Omega) \cdots ) ) + b_{L-1} \supset [-D,D].
	\end{equation*}
	Finally, if we let
	\begin{equation*}
		\Phi = \big( (A_1,0),\dots,(A_{L-2},0),(A_{L-1},b_{L-1}) \big)
	\end{equation*}
	then $R_\rho^\Omega(\Phi)(\Omega) \supset [-D,D]$ by construction.
\end{proof}

We will also use a lemma about the existence of strong $L^p$ derivatives on bounded sets, which is similar to Exercise 8.9 in \citep{F99}.

\begin{lem} \label{lem:Folland}
    Let $p \in [1,\infty]$ and let $K \subset \R$ be compact with $f \in L^p(K)$. If $f$ is absolutely continuous on every bounded interval and its pointwise derivative $f'$ is in $L^p(K)$, then
    \begin{equation*}
        \lim_{y \to 0} \left\| \frac{f(\cdot+y) - f(\cdot)}{y} - f'(\cdot) \right\|_{L^p(K)} = 0.
    \end{equation*}
    In this case, we call $f'$ the strong $L^p$ derivative of $f$ on $K$.
\end{lem}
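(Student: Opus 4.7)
The plan is to reduce the claim to the standard fact that translation is strongly continuous on $L^p$ for $p<\infty$, using the fundamental theorem of calculus for absolutely continuous functions to rewrite the difference quotient minus $f'$ as an average of translation differences.

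First, since $f$ is absolutely continuous on every bounded interval, for each $x \in K$ and each small $y$ (assume $y>0$ by symmetry) I can write
\[ f(x+y) - f(x) = \int_0^y f'(x+s)\, ds, \]
which rearranges to the key identity
\[ \frac{f(x+y) - f(x)}{y} - f'(x) = \frac{1}{y}\int_0^y \bigl[f'(x+s) - f'(x)\bigr]\, ds. \]
This converts the problem of bounding the difference quotient error into bounding an average of translates of $f'$ against $f'$ itself.

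Next, for $p \in [1,\infty)$ I would apply Minkowski's integral inequality (equivalently, Jensen applied to $t\mapsto t^p$ combined with Tonelli) in $L^p(K)$ to obtain
\[ \left\| \frac{f(\cdot+y) - f(\cdot)}{y} - f'(\cdot) \right\|_{L^p(K)} \leq \frac{1}{y} \int_0^{y} \| \tau_s f' - f' \|_{L^p(K)}\, ds, \]
where $\tau_s f'(x) = f'(x+s)$. Since translation is strongly continuous on $L^p(\R)$ for $p<\infty$, the integrand tends to $0$ as $s\to 0$, so its average over $[0,y]$ does as well, and the $p<\infty$ case is complete. For $p=\infty$, the same pointwise bound gives the conclusion whenever $f'$ is uniformly continuous on a compact neighborhood of $K$, which holds in every situation where this lemma is invoked in the proofs of Theorems \ref{thm:m-(m+1)} and \ref{thm:rho_smooth}.

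The main technical obstacle is that the translation argument requires $f'$ to be defined and $L^p$-integrable on a slight enlargement of $K$, not just on $K$ itself. I would handle this at the outset by fixing a compact $K' \supset K + [-\delta,\delta]$ for some $\delta>0$; absolute continuity of $f$ on bounded intervals produces $f'$ a.e.\ on $K'$, and in every application the same hypotheses (local $p$-integrability and boundedness on compacts) apply verbatim on $K'$. Once this enlargement is in place, the remainder of the argument is a routine application of Minkowski's inequality and $L^p$-continuity of translation.
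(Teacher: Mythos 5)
Your proof is correct and follows essentially the same route as the paper's: the fundamental-theorem-of-calculus identity rewriting the difference-quotient error as $\frac{1}{y}\int_0^y [f'(\cdot+s)-f'(\cdot)]\,ds$, then Minkowski's integral inequality, then continuity of translation in $L^p$. You are in fact more careful than the paper on the two points it glosses over --- enlarging $K$ so that the translates of $f'$ are defined and integrable, and the fact that translation is not strongly continuous on $L^\infty$ unless $f'$ is uniformly continuous (the paper asserts the $\epsilon$--$\delta$ translation estimate for all $p \in [1,\infty]$ directly from $f' \in L^p(K)$, which fails for $p=\infty$ in general).
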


\begin{proof}[Proof of Lemma \ref{lem:Folland}]
    Let $K \subset \R$ be compact with $f \in L^p(K)$. Suppose $f$ is absolutely continuous on every bounded interval and its pointwise derivative $f'$ is in $L^p(K)$. Since $f$ is absolutely continuous on every bounded interval, we can write
    \begin{align*}
        \frac{f(x+y) - f(x)}{y} - f'(x) &= \frac{1}{y} \int_x^{x+y} f'(t) dt - f'(x) \\
        &= \frac{1}{y} \int_0^y f'(x+t) dt - f'(x) \\
        &= \frac{1}{y} \int_0^y [f'(x+t)-f'(x)] dt. 
    \end{align*}
    For $p<\infty$, Minkowski's inequality for integrals gives
    \begin{align*}
        \left\| \frac{f(\cdot+y) - f(\cdot)}{y} - f'(\cdot) \right\|_{L^p(K)} &= \left[ \int_K \left( \frac{1}{y} \int_0^y [f'(x+t)-f'(x)] dt \right)^p dx \right]^{1/p} \\
        &\leq \frac{1}{y} \int_0^y \left( \int_K [f'(x+t)-f'(x)]^p dx \right)^{1/p} dt \\
        &= \frac{1}{y} \int_0^y \left\| f'(\cdot+t)-f'(\cdot) \right\|_{L^p(K)} dt.
    \end{align*}
    For $p=\infty$, we can similarly write
    \begin{align*}
        \left\| \frac{f(\cdot+y) - f(\cdot)}{y} - f'(\cdot) \right\|_{L^\infty(K)} &= \sup_{x \in K} \left| \frac{1}{y} \int_0^y [f'(x+t)-f'(x)] dt \right| \\
        &\leq \frac{1}{y} \int_0^y \sup_{x \in K} |f'(x+t)-f'(x)| dt \\
        &= \frac{1}{y} \int_0^y \left\| f'(\cdot+t)-f'(\cdot) \right\|_{L^\infty(K)} dt.
    \end{align*}
    (We do not have to consider the essential supremum over $K$ because $f$ is absolutely continuous on $K$. Moreover, if $y<0$ then the negative sign can be absorbed by switching the limits of integration, which still allows us to move the supremum inside without flipping the inequality.) Now let $\epsilon>0$. Since $f' \in L^p(K)$, there exists $\delta>0$ such that
    \begin{equation*}
        \left\| f'(\cdot+t)-f'(\cdot) \right\|_{L^p(K)} \leq \epsilon
    \end{equation*}
    whenever $|t|<\delta$. If $|y|<\delta$ so we are only integrating over $t$ with $|t|<\delta$, we have
    \begin{equation*}
        \left\| \frac{f(\cdot+y) - f(\cdot)}{y} - f'(\cdot) \right\|_{L^p(K)} \leq \frac{1}{y} \int_0^y \epsilon \, dt = \epsilon.
    \end{equation*}
    Thus, $\lim_{y \to 0} \left\| y^{-1}\big(f(\cdot+y) - f(\cdot)\big) - f'(\cdot) \right\|_{L^p(K)} = 0$ as desired.
\end{proof}

\begin{proof}[Proof of Theorem \ref{thm:m-(m+1)}]
Given $p \in [1,\infty]$ and $\rho \in C^m(\R) \setminus C^{m+1}(\R)$ as in the statement of the theorem, we will construct a sequence of functions $(f_n)_{n=1}^\infty$ in $R_\rho^\Omega(d,1,\dots,1,2,1)$ whose $W^{m-1,p}(\Omega)$-limit $f$ is not in $C^m(\Omega)$. Since $\rho \in C^m(\R)$ implies
\begin{equation*}
	\mathcal{RNN}_\rho^{\Omega}(d,1,\dots,1,2,1) \subset C^m(\Omega),
\end{equation*}
we have $f \notin \mathcal{RNN}_\rho^{\Omega}(d,1,\dots,1,2,1)$ and hence $\mathcal{RNN}_\rho^{\Omega}(d,1,\dots,1,2,1)$ is not closed in $W^{m-1,p}(\Omega)$. Since $\tilde{N}_\ell \geq N_\ell$ for $\ell=1,\dots,L-1$ implies 
\begin{equation*}
	\mathcal{RNN}_\rho^{\Omega}(d,N_1,\dots,N_{L-1},N_L) \subset \mathcal{RNN}_\rho^{\Omega}(d,\tilde{N}_1,\dots,\tilde{N}_{L-1},N_L)
\end{equation*} 
by Lemma 2.5 in \citep{PRV20}, the nonclosedness of $\mathcal{RNN}_\rho^{\Omega}(d,N_1,\dots,N_{L-1},1)$ holds for any architecture $(d,N_1,\dots,N_{L-1},1)$ with $L \geq 2$ and $N_{L-1} \geq 2$. \\

To construct $(f_n)_{n=1}^\infty$, first note that $\rho \notin C^{m+1}(\R)$ implies $\rho \notin C^{m+1}([-C,C])$ for some $C>0$. Let $\Phi \in \mathcal{NN}(d,1,\dots,1)$ have $L-1$ layers such that $J(x) \defeq R_\rho^\Omega(\Phi)(x)$ satisfies $J(\Omega) \supset [-C,C]$ as in Lemma \ref{lem:J}. Define a sequence of neural networks $\Phi_n = \big( (A_1^n,b_1^n), (A_2^n,b_2^n) \big) \in \mathcal{NN}(1,2,1)$ by
\begin{equation*}
	A_1^n = \begin{pmatrix}
		1	\\
		1
	\end{pmatrix} \in \R^{2 \times 1},	\hspace{5mm} b_1^n = \begin{pmatrix}
		1/n	\\
		0
	\end{pmatrix} \in \R^2, \hspace{5mm}	A_2^n = \begin{pmatrix}
		n	&-n	
	\end{pmatrix} \in \R^{1 \times 2}, \hspace{5mm} b_2^n = 0 \in \R^1.
\end{equation*}
Next define $h_n : \R \to \R$ by
\begin{equation*}
	h_n(x) = R_\rho^\R(\Phi_n)(x) = n \rho(x + 1/n) - n \rho(x) = \frac{ \rho( x + 1/n ) - \rho(x) }{ 1/n }
\end{equation*}
and let $f_n = h_n \circ J = R_\rho^\Omega(\Phi_n \bullet \Phi)$ so that
\begin{equation*}
	f_n(x) = \frac{ \rho( J(x) + 1/n ) - \rho(J(x)) }{ 1/n }
\end{equation*}
for $n \in \N$. Notice that $f_n \in C^m(\R^d)$ for all $n \in \N$, and for any indices $i_1,\dots,i_l$ (not necessarily distinct) with $l \leq m$, we have
\begin{equation*}
	\frac{\partial^l}{\partial x_{i_1} \cdots \partial x_{i_l}} f_n(x) = \sum_{\pi \in \Pi} \left( \frac{ \rho^{(|\pi|)}( J(x) + 1/n ) - \rho^{(|\pi|)}(J(x)) }{1/n} \right)	\cdot \prod_{B \in \pi}  \frac{ \partial^{|B|} J(x) }{ \prod_{j \in B} \partial x_j }							%\frac{|\alpha|!}{ k_1! 1!^{k_1} \cdots k_{|\alpha|}! |\alpha|!^{k_{|\alpha|}} } \left( \frac{ \rho^{(k_1+\cdots+k_{|\alpha|})}\big( J(x)+1/n \big) - \rho^{(k_1+\cdots+k_{|\alpha|})}(J(x)) }{1/n} \right) \prod_{j=1}^{{|\alpha|}} \big(D^\alpha J^{(j)}(x)\big)^{k_j}
\end{equation*}
by Fa\`{a} di Bruno's formula (here the sum is taken over all set partitions $\pi$ of the set $\{1,\dots,l\}$ and the product is taken over all blocks $B$ in the partition $\pi$). We then have
\begin{equation}\label{eq:f_n^l}
	\lim_{n \to \infty} \frac{\partial^l}{\partial x_{i_1} \cdots \partial x_{i_l}} f_n(x) = \sum_{\pi \in \Pi} \left( \rho^{(|\pi|+1)}(J(x)) \right)	\cdot \prod_{B \in \pi} \frac{ \partial^{|B|} J(x) }{ \prod_{j \in B} \partial x_j } = \frac{\partial^l}{\partial x_{i_1} \cdots \partial x_{i_l}} \rho'(J(x))						%f_n^{(l)}(x) = \sum \frac{l!}{ k_1! 1!^{k_1} \cdots k_l! l!^{k_l} } \left( \rho^{(k_1+\cdots+k_l+1)}(J(x)) \right) \prod_{j=1}^{l} \big(J^{(j)}(x)\big)^{k_j}
\end{equation}
pointwise for $l<m$. For $l=0,\dots,m-1$, $\rho^{(l+1)}$ is continuous and bounded on every bounded interval, so $\rho^{(l)}$ is Lipschitz continuous and therefore absolutely continuous on every bounded interval. %\ac{Here is where bounded is used; can easily be replaced by directly saying $\rho$ is continuous and bounded on every bounded interval.}
Moreover, $\rho^{(l+1)} \in L^p_{\mathrm{loc}}(\R)$ by assumption. Hence, by %Exercise 8.9 in \citep{F99}, 
Lemma, \ref{lem:Folland}, the pointwise derivative of $\rho^{(l)}$ agrees with its strong $L^p$ derivative on $J(\Omega)$. %\ac{Confirm applies for Lploc}. 
Thus, we have
\begin{equation*}
	\lim_{n \to \infty} \left\| \frac{\rho^{(l)}(J(\cdot) + 1/n)-\rho^{(l)}(J(\cdot))}{1/n} - \rho^{(l+1)}(J(\cdot)) \right\|_{L^p(\Omega)} = 0
\end{equation*}
for $l=1,\dots,m-1$. Since all terms in Equation (\ref{eq:f_n^l}) are bounded %\ac{since x from compact set, so J(x) compact}
on $\Omega$, it follows that
\begin{equation*}
	\lim_{n \to \infty} \left\| f_n - \rho'(J(\cdot)) \right\|_{W^{m-1,p}(\Omega)} = \lim_{n \to \infty} \sum_{|\alpha| \leq m-1} \left\| D^\alpha \big(f_n-\rho'(J(\cdot))\big) \right\|_{L^p(\Omega)} = 0.
\end{equation*}
That is, the sequence $(f_n)_{n=1}^\infty$ converges in $W^{m-1,p}(\Omega)$ to the function $\Omega \to \R$ given by $x \mapsto f(x) \defeq \rho'(J(x))$. Since $[-C,C] \subset J(\Omega)$, we have $f \notin C^m(\Omega)$ and therefore $f \notin R_\rho^\Omega(d,1,\dots,1,2,1)$. Hence, $R_\rho^\Omega(d,1,\dots,1,2,1)$ is \textit{not} closed in $W^{m-1,p}(\Omega)$. \\

If in addition $\rho^{(m)}$ is absolutely continuous and the weak derivative $\rho^{(m+1)}$ exists and is in $L^p_{\mathrm{loc}}(\R)$, then %Exercise 8.9 in \citep{F99} 
Lemma \ref{lem:Folland} gives
\begin{equation*}
	\lim_{n \to \infty} \left\| \frac{\rho^{(m)}(J(\cdot) + 1/n)-\rho^{(m)}(J(\cdot))}{1/n} - \rho^{(m+1)}(J(\cdot)) \right\|_{L^p(\Omega)} = 0
\end{equation*}
and therefore
\begin{equation*}
	\lim_{n \to \infty} \left\| f_n - \rho'(J(\cdot)) \right\|_{W^{m,p}(\Omega)} = \lim_{n \to \infty} \sum_{|\alpha| \leq m} \left\| D^\alpha \big(f_n-\rho'(J(\cdot))\big) \right\|_{L^p(\Omega)} = 0.
\end{equation*}
Hence, $R_\rho^\Omega(d,1,\dots,1,2,1)$ is \textit{not} closed in $W^{m,p}(\Omega)$ under this further assumption on $\rho$.
\end{proof}

\subsection{Proof of Theorem \ref{thm:rho_smooth}} \label{subapp:rho_smooth}

The proof of Theorem \ref{thm:rho_smooth} uses an intermediate result which states that neural networks can approximate coordinate projection maps in Sobolev norm under certain conditions on the activation function.

\begin{lem} \label{lem:proj}
	Let $\rho: \R \to \R$ be real analytic, bounded, and not constant. Suppose that all derivatives $\rho^{(n)}$ of $\rho$ are bounded. Then for every $d, L, k \in \N$, $p \in [1,\infty]$, $\epsilon>0$, $B>0$, and every $i \in \{1,\dots,d\}$, one can construct a neural network $\tilde{\Phi}^{B}_{\epsilon,i,k,p} \in \mathcal{NN}(d,1,\dots,1)$ with $L$ layers such that
	\begin{equation*}
	\left\| R_\rho^{[-B,B]^d}(\tilde{\Phi}^{B}_{\epsilon,i,k,p}) - P_i \right\|_{W^{k,p}([-B,B]^d)} \leq \epsilon,
	\end{equation*}
	where $P_i(x) = x_i$. 
\end{lem}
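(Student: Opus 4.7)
The $L = 1$ case is trivial, since the architecture $(d,1)$ yields an affine realization, and taking $A_1 = e_i^T$, $b_1 = 0$ realizes $P_i$ exactly. For $L \geq 2$, the plan is induction on $L$. For the base case $L = 2$: since $\rho$ is real analytic and non-constant, there exists $x_0 \in \R$ with $\rho'(x_0) \neq 0$. Define $\tilde{\Phi}^B_{\epsilon,i,k,p} = ((A_1,b_1),(A_2,b_2))$ with $A_1 = \epsilon \, e_i^T \in \R^{1\times d}$, $b_1 = x_0$, $A_2 = [\epsilon \rho'(x_0)]^{-1}$, $b_2 = -\rho(x_0)/[\epsilon \rho'(x_0)]$. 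Its realization is $x \mapsto \psi_\epsilon(x_i)$, where $\psi_\epsilon(y) \defeq [\rho(\epsilon y + x_0) - \rho(x_0)]/[\epsilon \rho'(x_0)]$. Direct differentiation gives $\psi_\epsilon'(y) = \rho'(\epsilon y + x_0)/\rho'(x_0)$ and $\psi_\epsilon^{(j)}(y) = \epsilon^{j-1}\rho^{(j)}(\epsilon y + x_0)/\rho'(x_0)$ for $j \geq 2$. Using the hypothesis that every derivative of $\rho$ is bounded, $\psi_\epsilon' \to 1$ and $\psi_\epsilon^{(j)} \to 0$ (for $j \geq 2$) uniformly on $[-B,B]$ as $\epsilon \to 0$. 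Since the realization depends only on $x_i$, partial derivatives with respect to $x_j$ for $j \neq i$ vanish identically, and the $W^{k,p}([-B,B]^d)$ norm of $\psi_\epsilon(x_i) - x_i$ reduces to a 1D computation on $[-B,B]$ that tends to 0.

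For the inductive step $L \to L+1$: given $\tilde{\Phi}_L \in \mathcal{NN}(d,1,\dots,1)$ with $L$ layers whose realization $f$ satisfies $\|f - P_i\|_{W^{k,p}([-B,B]^d)} < \epsilon/2$, note that $f$ is real analytic (being a composition of affine maps and $\rho$) and thus has all derivatives bounded on the compact set $[-B,B]^d$; in particular $f([-B,B]^d) \subset [-B',B']$ for some $B' > 0$. Let $\tilde{\Psi}_\delta \in \mathcal{NN}(1,1,1)$ be the 2-layer scalar network realizing $\psi_\delta$ as constructed above (with $\delta$ to be chosen). Form the concatenation $\tilde{\Psi}_\delta \bullet \tilde{\Phi}_L$, which lies in $\mathcal{NN}(d,1,\dots,1)$ with $L+2-1 = L+1$ layers and has realization $\psi_\delta \circ f$. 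The triangle inequality reduces the problem to showing $\|\psi_\delta \circ f - f\|_{W^{k,p}([-B,B]^d)} < \epsilon/2$ for $\delta$ sufficiently small. Expanding the partial derivatives of $\psi_\delta \circ f - f$ via Fa\`a di Bruno's formula writes them as finite sums of products of derivatives of $f$ (uniformly bounded on $[-B,B]^d$) and quantities $\psi_\delta^{(j)}(f(\cdot)) - \delta_{j,1}$ (uniformly bounded on $[-B',B']$ and tending to 0 uniformly as $\delta \to 0$, by the base-case computation applied on $[-B',B']$ in place of $[-B,B]$). Choosing $\delta$ small therefore yields the required bound.

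The main obstacle is controlling all mixed partial derivatives up to order $k$ in the Sobolev norm, both in the base case and in the composition step. This rests essentially on the formula $\psi_\epsilon^{(j)}(y) = \epsilon^{j-1}\rho^{(j)}(\epsilon y + x_0)/\rho'(x_0)$; its decay as $\epsilon \to 0$ requires precisely the hypothesis that every derivative of $\rho$ is bounded, not merely $\rho$ itself. Without that hypothesis, the higher-order Sobolev terms need not vanish, so the approach would break down outside the analytic bounded-derivative setting asserted in the lemma.
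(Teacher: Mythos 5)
Your proposal is correct and follows essentially the same route as the paper's proof: the same near-identity scalar map $\psi_\epsilon$ (the paper's $\rho_C$ with $C=1/\epsilon$) built from a non-degenerate point $z_0$ of $\rho'$, composed once per additional layer, with boundedness of all derivatives of $\rho$ and Fa\`a di Bruno's formula controlling the higher-order Sobolev terms. Organizing the $L$-fold composition as an induction on $L$ (and handling $L=1$ explicitly) is only a cosmetic repackaging of the paper's argument, though it does make the choice of the auxiliary tolerances for the derivative estimates slightly more transparent.
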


\begin{proof}[Proof of Lemma \ref{lem:proj}.]
	Without loss of generality, we only consider the case $\epsilon \leq 1$. Set $\Omega \defeq [-B,B]^d$ and $\epsilon' \defeq \epsilon/\left((k+1)(2B)^{d/p} L\right)$ (in particular, $\epsilon' = \epsilon/(k+1)L$ for $p=\infty$). Since $\rho$ is in $C^\infty(\R)$ and is not constant, there exists $z_0 \in \R$ such that $\rho'(z_0) \ne 0$. For each $C>0$, define $\tilde{\Phi}^C_1 \defeq \big( (A_1,b_1),(A_2,b_2) \big) \in \mathcal{NN}(d,1,1)$ by
	\begin{equation*}
		A_1 = \begin{pmatrix}
			\tfrac{1}{C}	&0	&\cdots	&0
		\end{pmatrix}	\in \R^{1 \times d},	\hspace{5mm}	b_1 = z_0 \in \R^1,	\hspace{5mm}	A_2 = \frac{C}{\rho'(z_0)} \in \R^{1 \times 1},	\hspace{5mm}	b_2 = -\frac{C \rho(z_0)}{\rho'(z_0)} \in \R^{1}
	\end{equation*} 
	so that 
	\begin{equation*}
		R^\Omega_\rho(\tilde{\Phi}^C_1)(x) = \frac{C}{\rho'(z_0)} \cdot \rho\left( \frac{x_1}{C} + z_0 \right) - \frac{C \rho(z_0)}{\rho'(z_0)}
	\end{equation*}
	where $x=(x_1,\dots,x_d)$. Notice that
	\begin{equation*}
		\lim_{C \to \infty} R^\Omega_\rho(\tilde{\Phi}^C_1)(x) = \lim_{C \to \infty} x_1 \cdot \frac{1}{\rho'(z_0)} \cdot \frac{\rho(z_0+x_1/C)-\rho(z_0)}{x_1/C} = x_1
	\end{equation*}
	pointwise. In fact, there exists some $C_0>0$ such that $| R^\Omega_\rho(\tilde{\Phi}^C_1)(x) - x_1 | \leq \epsilon'$ for all $x \in (-B-L\epsilon,B+L\epsilon)^d$ and all $C \geq C_0$. To see why, notice that by the definition of the derivative there exists $\delta>0$ such that
	\begin{equation}\label{eq:Prop_deriv}
		\left| \frac{\rho(z_0+t) - \rho(z_0)}{t} - \rho'(z_0) \right| \leq \frac{ |\rho'(z_0)| \cdot \epsilon' }{1+B+L}
	\end{equation}
	for all $t \in \R$ with $|t| \leq \delta$. Set $C_0 \defeq (B+L)/\delta$ and let $C \geq C_0$ be arbitrary. Since $\epsilon \leq 1$, every $x \in (-B-L\epsilon,B+L\epsilon)^d$ satisfies $|x_1| \leq B+L$. If we set $t = x_1/C$, then $$|t| = |x_1|/C \leq (B+L)/C \leq (B+L)/C_0 = \delta.$$ It follows that
	\begin{align*}
		| R^\Omega_\rho(\tilde{\Phi}^C_1)(x) - x_1 | &= \left| \frac{C}{\rho'(z_0)} \cdot \rho\left( \frac{x_1}{C} + z_0 \right) - \frac{C \rho(z_0)}{\rho'(z_0)} - x_1 \right|	\\
		&= \left| \frac{C}{\rho'(z_0)} \right| \left| \rho\left( z_0 + \frac{x_1}{C} \right) - \rho(z_0) - \rho'(z_0) \cdot \frac{x_1}{C} \right| \\
		&= \left| \frac{C}{\rho'(z_0)} \right| \left| \rho\left( z_0 + t \right) - \rho(z_0) - \rho'(z_0) t \right| \\
		&\leq \left| \frac{C}{\rho'(z_0)} \right| \cdot \frac{ |\rho'(z_0)| \cdot \epsilon' }{1+B+L} \cdot |t|	&&(\text{by } \eqref{eq:Prop_deriv}) \\
		&= \left| \frac{C}{\rho'(z_0)} \right| \cdot \frac{ |\rho'(z_0)| \cdot \epsilon' }{1+B+L} \cdot \left| \frac{x_1}{C} \right| \\
		&= \frac{|x_1|}{1+B+L} \cdot \epsilon'	\\
		&\leq \epsilon'
	\end{align*}
	for all $x \in (-B-L\epsilon,B+L\epsilon)^d$ and all $C \geq C_0$. We also have
	\begin{equation*}
		\frac{\partial}{\partial x_1} R^\Omega_\rho(\tilde{\Phi}^C_1)(x) = \frac{1}{\rho'(z_0)} \cdot \rho'\left( \frac{x_1}{C} + z_0 \right) \xrightarrow{C \to \infty} 1 
	\end{equation*}
	pointwise. Note that $\rho'$ is Lipschitz because $\rho''$ is bounded. Hence, if we define $C_1 \defeq B\cdot \text{Lip}(\rho')/(|\rho'(z_0)| \cdot \hat{\epsilon}_1)$ (where $\hat{\epsilon}_1>0$ will be chosen later) then
	\begin{align*}
		\left| \frac{\partial}{\partial x_1} R^\Omega_\rho(\tilde{\Phi}^C_1)(x) - 1 \right| &= \left| \frac{1}{\rho'(z_0)} \cdot \rho'\left( \frac{x_1}{C} + z_0 \right) - 1 \right|	\\
		&= \left| \frac{1}{\rho'(z_0)} \right| \left| \rho'\left( z_0+\frac{x_1}{C} \right) - \rho'(z_0) \right| \\
		&\leq \left| \frac{1}{\rho'(z_0)} \right| \text{Lip}(\rho') \left| \frac{x_1}{C} \right| \\
		&\leq \frac{ B \cdot \text{Lip}(\rho') }{ |\rho'(z_0)| } \cdot \frac{1}{C_1} \\
		&= \hat{\epsilon}_1
	\end{align*}
	for all $x \in \Omega$ and all $C \geq C_1$. We also have
	\begin{equation*}
		\frac{\partial^n}{\partial x_1^n} R^\Omega_\rho(\tilde{\Phi}^C_1)(x) = \frac{1}{C^{n-1}} \cdot \frac{1}{\rho'(z_0)} \cdot \rho^{(n)}\left( \frac{x_1}{C} + z_0 \right) \xrightarrow{C \to \infty} 0 
	\end{equation*}
	pointwise for $n \geq 2$. In fact, if we define $C_n \defeq \max(1,\|\rho^{(n)}\|_\infty/(|\rho'(z_0)| \cdot \hat{\epsilon}_n))$ (where $\hat{\epsilon}_n>0$ will be chosen later) then
	\begin{align*}
		\left| \frac{\partial^n}{\partial x_1^n} R^\Omega_\rho(\tilde{\Phi}^C_1)(x) - 0 \right| &= \left| \frac{1}{C^{n-1}} \cdot \frac{1}{\rho'(z_0)} \cdot \rho^{(n)}\left( \frac{x_1}{C} + z_0 \right) - 0 \right| \\
		&= \frac{1}{C^{n-1}} \cdot \frac{1}{|\rho'(z_0)|} \cdot \left| \rho^{(n)}\left( \frac{x_1}{C} + z_0 \right) \right| \\
		&\leq \frac{1}{C}\cdot \frac{1}{|\rho'(z_0)|} \cdot \|\rho^{(n)}\|_\infty \\
		&\leq \frac{1}{C_n}\cdot \frac{1}{|\rho'(z_0)|} \cdot \|\rho^{(n)}\|_\infty \\
		&\leq \hat{\epsilon}_n
	\end{align*}
	for all $x \in \Omega$ and all $C \geq C_n$. Set $C^* = \max(C_0,C_1,\dots,C_k)$. Then
	\begin{align*}
		\left| R^\Omega_\rho(\tilde{\Phi}^{C^*}_1)(x) - x_1 \right| &\leq \epsilon', \\%= \frac{\epsilon}{(k+1)(2B)^{d/p} L} \\
		\left| \frac{\partial}{\partial x_1} R^\Omega_\rho(\tilde{\Phi}^{C^*}_1)(x) - 1 \right| &\leq \hat{\epsilon}_1, \\%= \frac{\epsilon}{(k+1)(2B)^{d/p} L}	\\
		\left| \frac{\partial^n}{\partial x_1^n} R^\Omega_\rho(\tilde{\Phi}^{C^*}_1)(x) - 0 \right| &\leq \hat{\epsilon}_n, &n=2,\dots,k
	\end{align*}
	for all $x \in \Omega$. %\ek{Just used $[-B,B]^d$.  Should use one or the other.} 
	Furthermore, any partial derivatives involving $x_2,\dots,x_n$ are identically zero. Next define a network $\tilde{\Phi}^{C^*}_2 = \big( (A_1',b_1'),(A_2',b_2') \big) \in \mathcal{NN}(1,1,1)$ by
	\begin{equation*}
		A_1' = \tfrac{1}{C^*}	\in \R^{1 \times 1},	\hspace{5mm}	b_1' = z_0 \in \R^1,	\hspace{5mm}	A_2' = \frac{C^*}{\rho'(z_0)} \in \R^{1 \times 1},	\hspace{5mm}	b_2' = -\frac{C^* \rho(z_0)}{\rho'(z_0)} \in \R^{1}
	\end{equation*}
	and define $\tilde{\Phi}^{C^*} = \tilde{\Phi}^{C^*}_2 \bullet \cdots \bullet \tilde{\Phi}^{C^*}_2 \bullet \tilde{\Phi}^{C^*}_1 \in \mathcal{NN}(d,1,\dots,1)$, where we take $L-2$ concatenations to get $L$ layers. Then
	\begin{equation*}
		R^\Omega_\rho(\tilde{\Phi}^{C^*})(x) = (\rho_{C^*} \circ \cdots \circ \rho_{C^*})(x_1)
	\end{equation*} 
	where $\rho_{C^*}(z) = \tfrac{C^*}{\rho'(z_0)} \cdot \rho\left( \frac{z}{C^*} + z_0 \right) - \tfrac{C^* \rho(z_0)}{\rho'(z_0)}$ is applied $L$ times. Inductively, we have
	\begin{equation} \label{eq:Leps}
		\left| R^\Omega_\rho(\tilde{\Phi}^{C^*})(x) - x_1 \right| \leq L \epsilon' = \frac{\epsilon}{(k+1)(2B)^{d/p}}
	\end{equation}
	by applying $\rho_C$ $L$ times. Working with the derivatives $\frac{\partial^n}{\partial x_1^n} \big( \rho_C \circ \cdots \circ \rho_C \big)(x_1)$ is not as simple because the derivatives of this composition of functions will involve applications of the chain rule and product rule. However, we only apply the chain rule and product rule finitely many times, so Equation \eqref{eq:Leps} and the boundedness of all derivatives of $\rho$ guarantee that $\hat{\epsilon}_1,\dots,\hat{\epsilon}_k$ can be chosen so that %\ek{Maybe add a bit more explanation here.}
	\begin{align*}
		%\left| R^\Omega_\rho(\tilde{\Phi}^{C^*})(x) - x_1 \right| &\leq \frac{\epsilon}{(k+1)(2B)^{d/p}} \\
		\left| \frac{\partial}{\partial x_1} R^\Omega_\rho(\tilde{\Phi}^{C^*})(x) - 1 \right| &\leq \frac{\epsilon}{(k+1)(2B)^{d/p}}	\\
		\left| \frac{\partial^n}{\partial x_1^n} R^\Omega_\rho(\tilde{\Phi}^{C^*})(x) - 0 \right| &\leq \frac{\epsilon}{(k+1)(2B)^{d/p}}	&&n=2,\dots,k.
	\end{align*}
	For $p<\infty$, it follows that
	\begin{align*}
		\left\| R_\rho^{\Omega}(\tilde{\Phi}^{C^*}) - P_1 \right\|_{W^{k,p}(\Omega)} &= \sum_{|\alpha| \leq k} \left\| D^\alpha R_\rho^{\Omega}(\tilde{\Phi}^{C^*}) - D^\alpha P_1 \right\|_{L^p(\Omega)} \\
		&= \sum_{n=0}^{k} \left\| \frac{\partial^n}{\partial x_1^n} R_\rho^{\Omega}(\tilde{\Phi}^{C^*}) - \frac{\partial^n}{\partial x_1^n} P_1 \right\|_{L^p(\Omega)} \\
		&= \sum_{n=0}^k \left( \int_{\Omega} \left| \frac{\partial^n}{\partial x_1^n} R_\rho^{\Omega}(\tilde{\Phi}^{C^*})(x) - \frac{\partial^n}{\partial x_1^n} P_1(x) \right|^p dx \right)^{1/p} \\
		&\leq \sum_{n=0}^k \left( \int_{\Omega} \left| \frac{\epsilon}{(k+1)(2B)^{d/p} } \right|^p dx \right)^{1/p} \\
		&= \sum_{n=0}^k \left( (2B)^d \cdot \frac{\epsilon^p}{(k+1)^p (2B)^{d}} \right)^{1/p} \\
		&= \sum_{n=0}^k \frac{\epsilon}{k+1} \\
		&= \epsilon.
	\end{align*}
	For $p=\infty$, we have
	\begin{align*}
	    \left\| R_\rho^{\Omega}(\tilde{\Phi}^{C^*}) - P_1 \right\|_{W^{k,\infty}(\Omega)} &= \sum_{|\alpha| \leq k} \left\| D^\alpha R_\rho^{\Omega}(\tilde{\Phi}^{C^*}) - D^\alpha P_1 \right\|_{L^\infty(\Omega)} \\
	    &= \sum_{n=0}^{k} \left\| \frac{\partial^n}{\partial x_1^n} R_\rho^{\Omega}(\tilde{\Phi}^{C^*}) - \frac{\partial^n}{\partial x_1^n} P_1 \right\|_{L^\infty(\Omega)} \\
	    &\leq \sum_{n=0}^k \frac{\epsilon}{k+1} \\
		&= \epsilon.
	\end{align*}
	For $i=2,\dots,n$ we can just permute the columns of $A_1$ accordingly.
\end{proof}

We will also need to consider Sobolev convergence of compositions of functions, for which the following lemma will be useful.

\begin{lem} \label{lem:SobolevComposition}
	Let $k \in \N$ and $p \in [1,\infty]$, and let $\Omega \subset \R^d$ be bounded and compact. Suppose $(g_n)_{n \in \N} \subset C^\infty(\Omega)$ is a sequence of functions such that $\|g_n - h\|_{W^{k,p}} \to 0$ for some $h \in C^\infty(\Omega)$. %with all derivatives of $h$ bounded. \ek{Since $\Omega$ is compact and $h$ is smooth, the derivatives are automatically bounded} 
	If $(f_n)_{n \in \N} \subset C^\infty(\Omega)$ is a sequence of functions whose derivatives are all bounded independent of $n$, then
	\begin{equation*}
		\lim_{n \to \infty} \|f_n \circ g_n - f_n \circ h\|_{W^{k,p}(\Omega)} = 0.
	\end{equation*}
	That is, if $g_n \to h$ in Sobolev norm, then $f_n \circ g_n \to f_n \circ h$ in Sobolev norm.
\end{lem}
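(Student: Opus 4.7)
The plan is to apply Faà di Bruno's formula to both $D^\alpha(f_n \circ g_n)$ and $D^\alpha(f_n \circ h)$ for every multi-index $\alpha$ with $|\alpha| \leq k$, writing each as a finite sum indexed by set partitions $\pi$ of the differentiation indices, whose summands have the form
\begin{equation*}
f_n^{(|\pi|)}(g_n(x)) \prod_{B \in \pi} D^{\beta_B} g_n(x).
\end{equation*}
Subtracting the two expansions produces, for each partition $\pi$, a single pair of Faà di Bruno terms whose $L^p(\Omega)$ norm I must drive to $0$. Summing over all $\alpha$ with $|\alpha| \leq k$ then yields the Sobolev norm.

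For each such pair I will use a two-step telescoping. First I peel off the outer factor,
\begin{equation*}
\bigl[f_n^{(|\pi|)}(g_n) - f_n^{(|\pi|)}(h)\bigr] \prod_B D^{\beta_B} g_n + f_n^{(|\pi|)}(h) \Bigl[\prod_B D^{\beta_B} g_n - \prod_B D^{\beta_B} h\Bigr],
\end{equation*}
and then I expand the product difference block by block so that each resulting summand contains exactly one ``difference'' factor $D^{\beta_{B_0}} g_n - D^{\beta_{B_0}} h$, with the remaining factors being derivatives of either $g_n$ or $h$ of order at most $|\alpha|$. The scalar factor $f_n^{(|\pi|)}(g_n) - f_n^{(|\pi|)}(h)$ is bounded pointwise by $\|f_n^{(|\pi|+1)}\|_\infty\,|g_n - h|$ via the mean value theorem, while $f_n^{(|\pi|)}(h)$ is bounded by $\|f_n^{(|\pi|)}\|_\infty$; both constants are uniform in $n$ by the hypothesis that all derivatives of $f_n$ are bounded independently of $n$. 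Each isolated ``difference'' factor $D^{\beta_{B_0}} g_n - D^{\beta_{B_0}} h$ (or $g_n - h$) tends to $0$ in $L^p(\Omega)$ directly from $g_n \to h$ in $W^{k,p}(\Omega)$.

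The main obstacle is controlling the products of the remaining non-difference derivative factors in an appropriate norm. Factors of the form $D^\beta h$ pose no problem: since $h \in C^\infty(\Omega)$ and $\Omega$ is compact, every such factor is uniformly bounded on $\Omega$. The delicate part is handling intermediate-order factors $D^\beta g_n$, for which I will write $D^\beta g_n = D^\beta h + (D^\beta g_n - D^\beta h)$ and expand; because only one difference factor needs to remain small while the others may be absorbed into $L^\infty$ bounds coming from $h$, each resulting summand decomposes as ``uniform constant'' $\times$ ``$L^p$-small'' $\times$ ``$L^\infty$-bounded''. For the top-order case $|\pi|=1$, $|\beta_{B_0}| = |\alpha|$, the single factor $D^\alpha g_n - D^\alpha h$ lies in $L^p$ by hypothesis and is simply multiplied by the uniformly bounded scalar $f_n'(h)$ or by $[f_n'(g_n)-f_n'(h)]$, which has an $L^\infty$ bound from $\|f_n''\|_\infty$. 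Careful bookkeeping of the Faà di Bruno expansion, together with this telescoping, will show that every contribution vanishes in $L^p(\Omega)$, completing the proof.
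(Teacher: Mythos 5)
Your proposal is correct and follows essentially the same route as the paper: Fa\`{a} di Bruno's formula for both compositions, a telescoping insertion of a cross term, uniform Lipschitz/sup bounds on $f_n^{(|\pi|)}$ from the hypothesis on $f_n$, and $L^\infty$ bounds on the derivatives of $h$ from smoothness and compactness; the paper merely inserts the cross term $f_n^{(|\pi|)}(g_n(\cdot))\prod_{B\in\pi} D^{\beta_B}h$ rather than yours, which changes nothing of substance. One caveat you share with the paper: for $p<\infty$, the summands of your block-by-block expansion containing two or more difference factors (and, in the paper's version, the iterated two-factor product estimate) require the intermediate derivatives of $g_n$ to be bounded on $\Omega$ uniformly in $n$, which does not follow from $W^{k,p}$ convergence alone but does hold for the networks to which the lemma is applied.
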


\begin{proof}[Proof of Lemma \ref{lem:SobolevComposition}.]
	For any $\ell \leq k$, we have
	\begin{equation*}
		\frac{\partial^l}{\partial x_1 \cdots \partial x_l} (f_n \circ g_n)(x) = \sum_{\pi \in \Pi} f_n^{(|\pi|)}(g_n(x)) \cdot \prod_{B \in \pi} \frac{ \partial^{|B|} g_n(x) }{ \prod_{j \in B} \partial x_j }
	\end{equation*}
	and
	\begin{equation*}
		\frac{\partial^l}{\partial x_1 \cdots \partial x_l} (f_n \circ h)(x) = \sum_{\pi \in \Pi} f_n^{(|\pi|)}(h(x)) \cdot \prod_{B \in \pi} \frac{ \partial^{|B|} h(x) }{ \prod_{j \in B} \partial x_j }
	\end{equation*}
	by Fa\`{a} di Bruno's formula, where the sum is taken over all set partitions $\pi$ of the set $\{1,\dots,l\}$ and the product is taken over all blocks $B$ in the partition $\pi$. Subtracting these two expressions, we get a sum over $\pi \in \Pi$ of terms of the form
	\begin{align}
		&f_n^{(|\pi|)}(g_n(x)) \cdot \prod_{B \in \pi} \frac{ \partial^{|B|} g_n(x) }{ \prod_{j \in B} \partial x_j } - f_n^{(|\pi|)}(h(x)) \cdot \prod_{B \in \pi} \frac{ \partial^{|B|} h(x) }{ \prod_{j \in B} \partial x_j } \notag \\[+2mm]
		&\hspace{1cm} = f_n^{(|\pi|)}(g_n(x)) \cdot \prod_{B \in \pi} \frac{ \partial^{|B|} g_n(x) }{ \prod_{j \in B} \partial x_j } - f_n^{(|\pi|)}(g_n(x)) \cdot \prod_{B \in \pi} \frac{ \partial^{|B|} h(x) }{ \prod_{j \in B} \partial x_j } \label{eq:first_term} \\
		&\hspace{2cm} + f_n^{(|\pi|)}(g_n(x)) \cdot \prod_{B \in \pi} \frac{ \partial^{|B|} h(x) }{ \prod_{j \in B} \partial x_j } - f_n^{(|\pi|)}(h(x)) \cdot \prod_{B \in \pi} \frac{ \partial^{|B|} h(x) }{ \prod_{j \in B} \partial x_j } \label{eq:second_term} 
	\end{align}
	(note that we added and subtracted a cross term). For the first term \eqref{eq:first_term}, we have
	\begin{align*}
		&\left\| f_n^{(|\pi|)}(g_n(\cdot)) \prod_{B \in \pi} \frac{ \partial^{|B|} g_n(\cdot) }{ \prod_{j \in B} \partial x_j } - f_n^{(|\pi|)}(g_n(\cdot)) \prod_{B \in \pi} \frac{ \partial^{|B|} h(\cdot) }{ \prod_{j \in B} \partial x_j } \right\|_{L^p(\Omega)} \\
		&\hspace{1cm} \leq C \left\| \prod_{B \in \pi} \frac{ \partial^{|B|} g_n(\cdot) }{ \prod_{j \in B} \partial x_j } - \prod_{B \in \pi} \frac{ \partial^{|B|} h(\cdot) }{ \prod_{j \in B} \partial x_j } \right\|_{L^p(\Omega)}
	\end{align*}
	for some constant $C$ since $f_n^{(|\pi|)}$ is bounded independent of $n$. Note that for each factor in the product over $B \in \pi$, we have
	\begin{equation*}
		\lim_{n \to \infty} \left\| \frac{ \partial^{|B|} g_n(\cdot) }{ \prod_{j \in B} \partial x_j } - \frac{ \partial^{|B|} h(\cdot) }{ \prod_{j \in B} \partial x_j } \right\|_{L^p(\Omega)} = 0
	\end{equation*}
	since $\lim_{n \to \infty} \|g_n - h\|_{W^{k,p}(\Omega)} = 0$ and $|B| \leq \ell \leq k$. It follows that the entire product converges to 0 in $L^p$ norm as $n \to \infty$. Indeed, for $\phi_1, \phi_2, \psi_1, \psi_2$ bounded and $p<\infty$,
	\begin{align*}
		\|\phi_1\phi_2 - \psi_1\psi_2\|_{L^p(\Omega)}^p &= \int_{\Omega} |\phi_1(x)\phi_2(x) - \psi_1(x)\psi_2(x)|^p dx \\
		&= \int_{\Omega} |\phi_1(x)\phi_2(x) - \psi_1(x)\phi_2(x) + \psi_1(x)\phi_2(x) - \psi_1(x)\psi_2(x)|^p dx \\
		&\leq 2^{p-1} \left( \int_{\Omega} |\phi_1(x)\phi_2(x) - \psi_1(x)\phi_2(x)|^p dx + \int_{\Omega} |\psi_1(x)\phi_2(x) - \psi_1(x)\psi_2(x)|^p dx \right) \\
		&\leq 2^{p-1} C_{\psi_1,\phi_2} \left( \|\phi_1-\psi_1\|_{L^p(\Omega)}^p + \|\phi_2-\psi_2\|_{L^p(\Omega)}^p \right)
	\end{align*}
	by Minkowski's inequality, where $C_{\psi_1,\phi_2}$ bounds both $|\psi_1|^p$ and $|\phi_2|^p$ on $\Omega$. %\ek{Shouldn't this be a bound on $\psi_1^p$ and $\phi_2^p$?} 
	For $p=\infty$,
	\begin{align*}
	    \|\phi_1\phi_2 - \psi_1\psi_2\|_{L^\infty(\Omega)} &= \sup_{x \in \Omega} |\phi_1(x)\phi_2(x) - \psi_1(x)\psi_2(x)| \\
	    &= \sup_{x \in \Omega} |\phi_1(x)\phi_2(x) - \psi_1(x)\phi_2(x) + \psi_1(x)\phi_2(x) - \psi_1(x)\psi_2(x)| \\
	    &\leq \sup_{x \in \Omega} \left( |\phi_2(x)| |\phi_1(x)-\psi_1(x)| + |\psi_1(x)| |\phi_2(x)-\psi_2(x)| \right) \\
	    &\leq C_{\psi_1,\phi_2} \left( \sup_{x \in \Omega} |\phi_1(x)-\psi_1(x)| + \sup_{x \in \Omega} |\phi_2(x)-\psi_2(x)| \right) \\
	    &= C_{\psi_1,\phi_2} \left( \|\phi_1-\psi_1\|_{L^\infty(\Omega)} + \|\phi_2-\psi_2\|_{L^\infty(\Omega)} \right)
	\end{align*}
	where $C_{\psi_1,\phi_2}$ bounds both $|\psi_1|$ and $|\phi_2|$ on $\Omega$. Thus, if $\phi_{1,n} \to \psi_1$ and $\phi_{2,n} \to \psi_2$ in $L^p$ norm as $n \to \infty$ with $C_{\psi_1,\phi_2}$ independent of $n$, then $\phi_{1,n}\phi_{2,n} \to \psi_1\psi_2$ in $L^p$ norm. For the second term \eqref{eq:second_term}, we have
	\begin{align*}
		&\left\| f_n^{(|\pi|)}(g_n(\cdot)) \prod_{B \in \pi} \frac{ \partial^{|B|} h(\cdot) }{ \prod_{j \in B} \partial x_j } - f_n^{(|\pi|)}(h(\cdot)) \prod_{B \in \pi} \frac{ \partial^{|B|} h(\cdot) }{ \prod_{j \in B} \partial x_j } \right\|_{L^p(\Omega)} \\
		&\hspace{1cm} \leq C_h \left\|f_n^{(|\pi|)}(g_n(\cdot)) - f_n^{(|\pi|)}(h(\cdot)) \right\|_{L^p(\Omega)}
	\end{align*}
	for some constant $C_h$ since all derivatives of $h$ are bounded because $h \in C^\infty(\Omega)$ with $\Omega$ compact. Since all derivatives of $f_n$ are bounded independent of $n$, $f_n^{(|\pi|)}$ is Lipschitz with constant $M$ independent of $n$. Thus, for $p<\infty$, we have
	\begin{align*}
		\left\|f_n^{(|\pi|)}(g_n(\cdot)) - f_n^{(|\pi|)}(h(\cdot)) \right\|_{L^p(\Omega)}^p &= \int_{\Omega} |f_n^{(|\pi|)}(g_n(x)) - f_n^{(|\pi|)}(h(x))|^p dx \\
		&\leq M^p \int_{\Omega} |g_n(x) - h(x)|^p dx \\
		&= M^p \|g_n - h\|_{L^p(\Omega)}^p.
	\end{align*}
	For $p=\infty$, we have
	\begin{align*}
	    \left\|f_n^{(|\pi|)}(g_n(\cdot)) - f_n^{(|\pi|)}(h(\cdot)) \right\|_{L^\infty(\Omega)} &= \sup_{x \in \Omega} \big|f_n^{(|\pi|)}(g_n(x)) - f_n^{(|\pi|)}(h(x)) \big| \\
	    &\leq M \sup_{x \in \Omega} |g_n(x)-h(x)| \\
	    &= M \|g_n-h\|_{L^\infty(\Omega)}.
	\end{align*}
	In either case, the second term converges to 0 as $n \to \infty$. It follows that
	\begin{equation*}
		\lim_{n \to \infty} \left\| D^\alpha \big( f_n \circ g_n - f_n \circ h \big) \right\|_{L^p(\Omega)} = 0
	\end{equation*}
	for any multi-index $\alpha$ with $|\alpha| \leq k$, so $\lim_{n \to \infty} \| f_n \circ g_n - f_n \circ h \|_{W^{k,p}(\Omega)} = 0$.
\end{proof}

\begin{proof}[Proof of Theorem \ref{thm:rho_smooth}] Set $\Omega \defeq [-B,B]^d$. Since $\rho \in C^\infty(\R)$ is not constant, there exists $z_0 \in \R$ such that $\rho'(z_0) \ne 0$. For each $n \in N$, define $\Phi_n^1 \defeq \big( (A_1^n,b_1^n),(A_2^n,b_2^n) \big) \in \mathcal{NN}(1,2,1)$ by
\begin{equation*}
	A_1^n = \begin{pmatrix}
		1	\\
		1/n
	\end{pmatrix} \in \R^{2 \times 1}, \hspace{5mm} b_1^n = \begin{pmatrix}
		0	\\
		z_0
	\end{pmatrix} \in \R^2, \hspace{5mm} A_2^n = \begin{pmatrix}
		1	&n
	\end{pmatrix} \in \R^{1 \times 2}, \hspace{5mm} b_2^n = -n\rho(z_0) \in \R^1
\end{equation*}
so that
\begin{equation*}
	R_\rho^{\R}(\Phi_n^1)(x) = \rho(x) + n \rho(x/n+z_0) - n\rho(z_0) = \rho(x) + x \cdot \frac{\rho(z_0+x/n)-\rho(z_0)}{x/n}
\end{equation*}
for all $x \in \R$. By Lemma \ref{lem:proj}, there exists a sequence of neural networks $(\Phi_n^2)_{n \in \N} \subset \mathcal{NN}(d,1,\dots,1)$ with $L-1$ layers such that
\begin{equation}\label{eq:approx_proj}
	\left\| R_\rho^\Omega(\Phi_n^2) - P_1 \right\|_{W^{k,p}(\Omega)} \leq \frac{1}{n}
\end{equation} 
for each $n \in \N$. Define $\Phi_n = \Phi^1_n \bullet \Phi^2_n \in \mathcal{NN}(d,1,\dots,1,2,1)$. Consider the map $F: \R^d \to \R$ defined by $F(x) = \rho(x_1) + \rho'(z_0) x_1$. We have
\begin{align*}
	\left\| R_\rho^\Omega(\Phi_n) - F \right\|_{W^{k,p}((-B,B)^d)} &= \left\| R_\rho^\Omega(\Phi_n) - R_\rho^\R(\Phi_n^1) \circ P_1 + R_\rho^\R(\Phi_n^1) \circ P_1 - F \right\|_{W^{k,p}(\Omega)} \\
	&\leq \left\| R_\rho^\R(\Phi_n^1) \circ R_\rho^\Omega(\Phi_n^2) - R_\rho^\R(\Phi_n^1) \circ P_1 \right\|_{W^{k,p}(\Omega)} \\ 
	&\quad \quad + \left\| R_\rho^\R(\Phi_n^1) \circ P_1 - F \right\|_{W^{k,p}(\Omega)}.
\end{align*}
Since $\tfrac{d^\ell}{dx^\ell} R_\rho^\R(\Phi_n^1)(x) = \rho^{(\ell)}(x) + \rho^{(\ell)}(x/n+z_0)/n^{\ell-1}$, each derivative of $R_\rho^\R(\Phi_n^1)$ is bounded independent of $n$. Combined with equation \eqref{eq:approx_proj} and the boundedness of all derivatives of $\rho$, this implies that
\begin{equation*}
	\left\| R_\rho^\R(\Phi_n^1) \circ R_\rho^\Omega(\Phi_n^2) - R_\rho^\R(\Phi_n^1) \circ P_1 \right\|_{W^{k,p}(\Omega)} \xrightarrow{n \to \infty} 0
\end{equation*} 
by Lemma \ref{lem:SobolevComposition}. Next note that
\begin{equation*}
	R_\rho^\R(\Phi_n^1) \circ P_1(x) = \rho(x_1) + x_1 \cdot \frac{\rho(z_0+x_1/n)-\rho(z_0)}{x_1/n} \xrightarrow{n \to \infty} \rho(x_1) + \rho'(z_0) x_1 = F(x)
\end{equation*}
pointwise. An argument similar to the one in the proof of Theorem \ref{thm:m-(m+1)} shows that $$\left\| R_\rho^\R(\Phi_n^1) \circ P_1 - F \right\|_{W^{k,p}(\Omega)} \xrightarrow{n \to \infty} 0$$ since the pointwise derivative agrees with the strong $L^p$ derivative. In total, we have $$\left\| R_\rho^\Omega(\Phi_n) - F \right\|_{W^{k,p}(\Omega)} \xrightarrow{n \to \infty} 0.$$ However, $F|_\Omega \notin \mathcal{RNN}(d,1,\dots,1,2,1)$. To see why suppose we had $F|_\Omega = R_\rho^\Omega(\Psi)$ for some neural network $\Psi$. Since $\rho$ is analytic, $F$ and $R_\rho^{\R^d}(\Psi)$ are both analytic %\ek{need to show this} 
and coincide on $\Omega = [-B,B]^d$. Hence, we must have $F = R_\rho^{\R^d}(\Psi)$ on all of $\R^d$. Note that $F(x) = \rho(x_1) + \rho'(z_0)x_1$ is unbounded because $\rho$ is bounded and $\rho'(z_0) \ne 0$. However, $R_\rho^{\R^d}(\Psi)$ is bounded because $\rho$ is. This contradicts $F = R_\rho^{\R^d}(\Psi)$, so $F|_\Omega \ne R_\rho^\Omega(\Psi)$. Since $$\left\| R_\rho^\Omega(\Phi_n) - F \right\|_{W^{k,p}((-B,B)^d)} \xrightarrow{n \to \infty} 0$$ with $R_\rho^\Omega(\Phi_n) \in \mathcal{RNN}(d,1,\dots,1,2,1)$ but $F|_\Omega \notin \mathcal{RNN}(d,1,\dots,1,2,1)$, we have shown that $\mathcal{RNN}(d,1,\dots,1,2,1)$ is not closed.
\end{proof}

\iffalse
\subsection{Activation Function Assumptions} \label{subapp:activation}

We can use simple properties of real analytic functions to see that the last four activation functions in Table \ref{table:activation} are analytic.

\begin{proof}
    We will use several properties of real analytic functions from \citep{KP02}. First, note that all polynomials are real analytic, and the square root function is real analytic on $(0,\infty)$. Also, compositions of real analytic functions, where the range of the inner function is a subset of the domain of the outer function, are again real analytic. Moreover, quotients of real analytic functions are real analytic, provided that the denominator never vanishes. It follows that the inverse square root unit $$\rho(x) = \frac{x}{\sqrt{1+ax^2}}$$ is real analytic. Next, note that $e^x$ is analytic because it can be expressed as a power series on all of $\R$. Thus, the sigmoid $\rho(x) = \frac{1}{1+e^{-x}}$ and the hyperbolic tangent function $\rho(x) = \frac{e^x-e^{-x}}{e^x+e^{-x}}$ are real analytic, since the denominators never vanish. Finally, if a function has an analytic derivative, then the function is analytic itself. Since the derivative of $\rho(x)=\arctan(x)$ is $\rho'(x) = \frac{1}{1+x^2}$ and this is analytic, the inverse tangent function is real analytic.
\end{proof}
\fi

\subsection{Proof of Proposition \ref{prop:rates}} \label{subapp:rates}
Let $\rho: \R \rightarrow \R$ be arbitrary, and define $(h_n)_{n=1}^\infty = (R_\rho^\R(\Phi_n))_{n=1}^\infty$ to be the realized networks in $\mathcal{RNN}(1,2,1)$ as in the proof of Theorem \ref{thm:m-(m+1)} (even if $\rho$ does not satisfy the hypotheses of the theorem). Then
\[
\norm{\Phi_n}_{total} = \max\{1,n\} + \max\{0,1/n\} = n + (1/n) < 2n.
\]
Thus, our goal will be to prove for compact, measurable $\Omega \subset \R$ with non-empty interior and various $\rho$ that there exists $C_p$, dependent on $\rho$, $p$, and possibly $\Omega$ but independent of $n$ such that
\[
\norm{h_n-\rho'}^p_{L^p(\Omega)} \leq C_p n^{-p}.
\]
For $p=\infty$, we will show directly that
\[
\norm{h_n-\rho'}_{L^\infty(\Omega)} \leq C n^{-1}
\]
for some $C$ independent of $n$. For some of the $\rho$, the bound will be over all of $\R$ rather than over $\Omega$.

Let $\rho \in C^m(\R)$ for some $m \geq 2$.  Since $\Omega$ is compact, $\norm{\rho''}_{L^\infty(\widetilde{\Omega})}$ is finite, where $\widetilde{\Omega} = \{ x \, : \, x \enskip \textrm{or} \enskip x-(1/n) \in \Omega\}$. Let $x \in \Omega$. It follows from a first order Taylor approximation that there exists $\xi \in (x, x+(1/n))$ such that
\begin{align*}
    \abs{h_n(x)-\rho'(x)} &= \abs{\frac{\rho(x+(1/n))-\rho(x)}{1/n} - \rho'(x)} =\frac{1}{2n} \abs{\rho''(\xi)} \leq \frac{1}{2n} \norm{\rho''}_{L^\infty(\widetilde{\Omega})}.
\end{align*}
Thus, for $p<\infty$, we have
\[
\norm{h_n-\rho'}^p_{L^p(\Omega)}\leq \left(\frac{1}{2n} \norm{\rho''}_{L^\infty(\widetilde{\Omega})}\right)^p \abs{\Omega} \eqdef C_p n^{-p}
\]
where $\abs{\Omega}$ is the measure of $\Omega$ and $C_p$ is independent of $n$. For $p=\infty$, we have
\[
\norm{h_n-\rho'}_{L^\infty(\Omega)} \leq \frac{1}{2n} \norm{\rho''}_{L^\infty(\widetilde{\Omega})} \eqdef Cn^{-1}
\]
where $C$ is independent of $n$.

Now let $\rho$ be the softsign function $\rho(x)=x/(1+\abs{x})$ from Table \ref{table:activation}.  Then $\rho'(x)=1/(1+\abs{x})^2$. It follows that
\begin{align*}
    \abs{h_n(x)-\rho'(x)} &= \abs{\frac{\rho(x+(1/n))-\rho(x)}{1/n} - \rho'(x)} \\[2mm]
    &= \abs{n\left(\frac{x+(1/n)}{1+\abs{x+(1/n)}}-\frac{x}{1+\abs{x}} \right)-\frac{1}{\left(1+\abs{x}\right)^2}}\\[2mm]
    &= \left\{\begin{array}{lr}
        \frac{1}{n(1+x+(1/n))(1+x)^2} & x \geq 0  \\[2mm]
        \frac{\abs{2nx^3+2(1-n)x^2-4x-(1/n)}}{(1+x+(1/n))(1-x)^2} & -1/n \leq x < 0 \\[2mm]
        \frac{1}{n(1-x-(1/n))(1-x)^2} & x < -1/n.
    \end{array} \right.
\end{align*}
For $p<\infty$, we have
\begin{align*}
\lefteqn{\norm{h_n-\rho'}_{L^p(\Omega)}^p }\\ 
&= \int_0^\infty\left(\frac{1}{n(1+x+(1/n))(1+x)^2}\right)^p dx + \int_{-1/n}^0 \left(\frac{\abs{2nx^3+2(1-n)x^2-4x-(1/n)}}{(1+x+(1/n))(1-x)^2}\right)^p dx \\
&\hspace{4mm}+ \int_{-\infty}^{-1/n} \left(\frac{1}{n(1-x-(1/n))(1-x)^2}\right)^p dx\\
&\eqdef A + B + C.
\end{align*}
We begin by bounding $n^p A$:
\begin{equation}
n^p A = \int_0^\infty\frac{1}{(1+x+(1/n))^p(1+x)^{2p}} dx < \int_0^\infty\frac{1}{(1+x)^{3p}} dx = \frac{1}{3p-1}. \label{eqn:abnd}
\end{equation}
Similarly,
\begin{equation}
n^p C = \int_{-\infty}^{-1/n} \frac{1}{(1-x-(1/n))^p(1-x)^{2p}} dx < \int_{-\infty}^{-1/n} \frac{1}{(1-x-(1/n))^{3p}} dx = \frac{1}{3p-1}.
\label{eqn:cbnd}
\end{equation}
We now seek to bound $B$. Let $q(x)=2nx^3+2(1-n)x^2-4x-(1/n)$ so that $\abs{q(x)}^p$ is the numerator of the integrand of $B$.  We would like to bound $\abs{q(x)}$ over $[-1/n,0]$.  Note that $q(0)=-1/n$ and $q(-1/n)=1/n$.  Also,
\[
q'(x) = 6nx^2 + 4(1-n)x -4,
\]
and thus $q'(x)$ is zero when 
\[
x = \frac{n-1 \pm \sqrt{1 + 4n + n^2}}{3n}.
\]
Since $(n-1 + \sqrt{1 + 4n + n^2})/(3n) > 2/3 >0$, we only care about the case $x_0 = (n-1 - \sqrt{1 + 4n + n^2})/(3n)$.  We claim that $q(x_0) > 0$ and thus an upper bound on $q(x_0)$ is an upper bound on $\abs{q(x_0)}$. To that end, we note that
\[
\frac{-1}{n} = \frac{n-1-\sqrt{4+4n+n^2}}{3n} \leq x_0 \leq \frac{n-1-\sqrt{1+2n+n^2}}{3n} = \frac{-2}{3n},
\]
where $q(-1/n)=1/n$ and $q(-2/(3n))=2/(9n^2) + 7/(9n)$.  Since $q''$ is negative over the interval $[-1/n,0)$, this implies that $q$ is positive over the interval $[-1/n,-2/(3n)]$, which includes $x_0$.  So we now bound $q(x_0)$:
\begin{align*}
    27n^2 q(x_0) &= -4n^3-24n^2-3n+4+(4n^2+16n+4)\sqrt{n^2+4n+1}\\
    &\leq -4n^3-24n^2-3n+4+(4n^2+16n+4)\sqrt{n^2+4n+4}\\
    &= 32n+14.
\end{align*}
As
\[
\frac{64}{27n} >\frac{32n}{27n^2}\left(1+\frac{14}{32n}\right) = \frac{32n+14}{27n^2} > \frac{1}{n},
\]
it is an upper bound for $\abs{q(x)}$ over $[-1/n,0]$. We now bound $B$:
\begin{align}
 \left(\frac{27n}{64}\right)^p  B &\leq  \int_{-1/n}^0 \frac{1}{\left((1+x+(1/n))(1-x)^2\right)^p} dx \nonumber \\
 &\leq  \int_{-1/n}^{-1/(2n)} \frac{1}{(1+x+(1/n))^{3p}} dx + \int_{-1/(2n)}^0 \frac{1}{(1-x)^{3p}} dx \nonumber\\
 &=2 \int_1^{1+1/(2n)}\frac{1}{x^{3p}}dx = \frac{2}{3p-1}\left(1-\left(1+\frac{1}{2n}\right)^{1-3p}\right) \nonumber \\
 &\leq \frac{2}{3p-1} \label{eqn:bbnd}.
\end{align}
Putting together~\eqref{eqn:abnd},~\eqref{eqn:cbnd}, and~\eqref{eqn:bbnd}, we obtain for any measurable $\Omega \subseteq \mathbb{R}$
\begin{align*}
 \norm{h_n-\rho'}^p_{L^p(\Omega)}\leq \norm{h_n-\rho'}^p_{L^p(\mathbb{R})} \leq \frac{1}{(3p-1)} \left( \frac{1}{n^p} + \frac{1}{n^p} +  2\left(\frac{64}{27n}\right)^p\right) \eqdef C_p n^{-p},
\end{align*}
where $C_p$ does not depend on $n$. For $p=\infty$, we can use the piecewise expression for $|h_n(x)-\rho'(x)|$ and the bound for $|q(x)|$ to see that
\[
    |h_n(x)-\rho'(x)| \leq \left\{\begin{array}{lr}
        \frac{1}{n} & x \geq 0  \\[2mm]
        \frac{64}{27n} & -1/n \leq x < 0 \\[2mm]
        \frac{1}{n} & x < -1/n.
    \end{array} \right.
\]
Thus, we have
\[
    \norm{h_n-\rho'}_{L^\infty(\Omega)} \leq \frac{64}{27n} \eqdef Cn^{-1}
\]
where $C$ is independent of $n$.

%%%%%%%%%%%%%%
%%%%%%%%%%%%%%
Finally, let $\rho$ be the ELU function $\rho(x) = x \chi_{x \geq 0} + (e^x-1)\chi_{x < 0}$ from Table \ref{table:activation}.  Then $\rho'(x) = \chi_{x \geq 0} + e^x\chi_{x < 0}$ and for all $x \neq 0$, $\rho''(x) = e^x\chi_{x < 0}$. 

We first compute for $x \geq 0$:
\begin{equation} \label{eq:ELUa}
 \abs{h_n(x)-\rho'(x)} =  \abs{\frac{\rho(x+(1/n))-\rho(x)}{1/n} - \rho'(x)} =  \abs{\frac{x+(1/n)-x}{1/n} - 1}=0.
\end{equation}

For $x < -1/n$, we may use Taylor approximation to prove the existence of a $\xi \in (x, x+(1/n))$ such that
\begin{align*}
 \abs{h_n(x)-\rho'(x)} &= \abs{\frac{\rho(x+(1/n))-\rho(x)}{1/n} - \rho'(x)} = \frac{1}{2n} \abs{\rho''(\xi)} = \frac{1}{2n} e^\xi \leq \frac{e^{1/n}}{2n} e^x, \end{align*}
where for $p<\infty$,
\begin{align}
\norm{\frac{e^{1/n}}{2n} e^\cdot}^p_{L^p((-\infty,-1/n])} &= \int_{-\infty}^{-1/n} \left(\frac{e^{1/n}}{2n} e^x \right)^p dx = \left( \frac{e^{1/n}}{2n}\right)^p \int_{-\infty}^{-1/n} e^{px} dx = \left( \frac{e^{1/n}}{2n}\right)^p\frac{1}{p} e^{-p/n} \nonumber\\
&= \frac{1}{2^p p} \frac{1}{n^p}. \label{eq:ELUb}
\end{align}
What remains is to analyze $ \abs{h_n(x)-\rho'(x)}$ for $-1/n \leq x < 0$.  To this end, we compute
\begin{align*}
  \abs{h_n(x)-\rho'(x)} &= \abs{\frac{\rho(x+(1/n))-\rho(x)}{1/n} - \rho'(x)} = \abs{\frac{(x+(1/n))-(e^x-1)}{1/n} - e^x} \\[2mm]
  &= \abs{nx +1-ne^x + n-e^x} = \abs{(n+1)(1-e^x)+nx} \\
  &\eqdef \abs{g(x)}.
\end{align*}
We will use Taylor's theorem over $[-1/n,0)$ both to show that $g(x)$ is non-negative and to give an upper bound. To this end, for each $x \in [-1/n, 0)$, there exists an $\xi \in (x,0)$ such that
\[
g(x) = (n+1)\left( -x - \frac{e^\xi}{2}x^2\right) + nx = -x -(n+1) \frac{e^\xi}{2} x^2 = -x\left(1 +(n+1) \frac{e^\xi}{2} x\right).
\] 
Since $x < 0$, $-x > 0$ and $1 + (n+1)e^\xi x/2 < 1$.  Thus $-x$ is an upper bound of $g$ over $[-1/n,0)$.  All that is left to show is $1 + (n+1)e^\xi x/2 > 0$.  As $e^{\xi} < e^0 = 1$,
\[
0 < 1 + \frac{n+1}{2} \cdot \frac{-1}{n} < 1 + \frac{(n+1) e^\xi}{2} x,
\]
as desired. This yields
\begin{equation}\label{eq:ELUc}
\norm{g}_{L^p([-1/n,0)}^p = \int_{-1/n}^0 \abs{g(x)}^p dx \leq \int_{-1/n}^0 \abs{x}^p dx = \frac{1}{(p+1)n^{p+1}} < \frac{1}{(p+1)n^p}.
\end{equation}
Putting~\eqref{eq:ELUa},~\eqref{eq:ELUb}, and~\eqref{eq:ELUc} together, we obtain
\[
\norm{h_n-\rho'}_{L^p(\Omega)}^p  <  \frac{1}{(p+1)n^p} + \frac{1}{2^p p} \frac{1}{n^p} \eqdef C_p n^{-p},
\]
where $C_p$ does not depend on $n$. For $p=\infty$, we have seen that $|h_n(x)-\rho'(x)| \leq \frac{1}{n}$
for all $x$, and therefore $\norm{h_n-\rho'}_{L^\infty(\Omega)} \leq n^{-1}$ as desired.

%\newpage

\section{More Experimental Results} \label{app:exp}

In this section, we include experimental results using the ISRLU activation function. Since ISRLU is $C^2$ but not $C^3$, we train the networks to learn randomly generated piecewise quadratic ($C^1$ but not $C^2$) functions in order-2 Sobolev space.

\begin{figure}[ht]
	\centering
	\begin{subfigure}[b]{0.32\linewidth}
	    \centering
	    \includegraphics[scale=0.33]{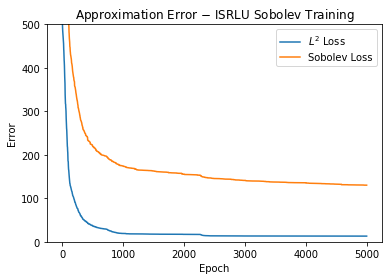}
	    \caption{Training error. \label{subcap:ISRLU_Loss_S}}
	\end{subfigure}
	\begin{subfigure}[b]{0.32\linewidth}
	    \centering
	    \includegraphics[scale=0.33]{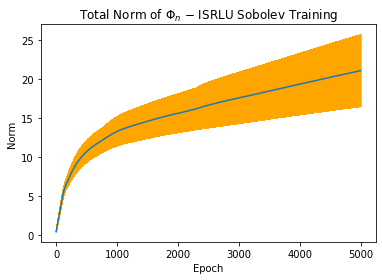}
	    \caption{The network norm. \label{subcap:ISRLU_Norm_S}}
	\end{subfigure}
	\begin{subfigure}[b]{0.32\linewidth}
	    \centering
	    \includegraphics[scale=0.33]{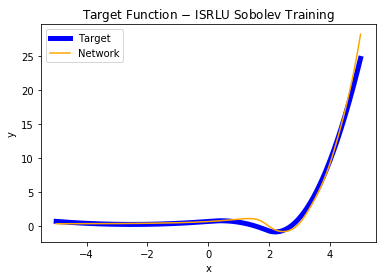}
	    \caption{The target function. \label{subcap:ISRLU_Target_S}}
	\end{subfigure}

	\captionsetup{width=0.85\linewidth}
	\caption{For each of 100 target functions $f$, we train an ISRLU network $\Phi$ in $\mathcal{NN}(1,10,1)$ to minimize $\|R_\rho^{[-5,5]}(\Phi)-f\|_{W^{2,2}}$. (\subref{subcap:ISRLU_Loss_S}) The best $W^{2,2}$ training loss achieved thus far is plotted at each epoch and averaged over all 100 experiments. (\subref{subcap:ISRLU_Norm_S}) The total network norm is averaged over all 100 experiments with 95\% confidence bands. (\subref{subcap:ISRLU_Target_S}) An example target function is plotted along with the realized neural network after training.}\label{cap:ISRLU_S}
\end{figure}

Figure \ref{cap:ISRLU_S} shows the results of 100 trails of ISRLU networks learning non-network target functions in Sobolev norm. We see that the $L^2$ and Sobolev approximation errors decrease rather quickly, though the Sobolev error is still decreasing, indicating that the networks are still learning the target functions. As with the ELU experiments, the total network norm increases quickly at first and then at a fairly steady rate. In this case, however, the 95\% confidence bands are wider because the norm grew fairly large in a few cases and did not grow much in some other cases.

These results are again consistent with Theorem \ref{thm:m-(m+1)}. We see that networks are able to closely approximate non-network target functions, which is evidence of the nonclosedness of sets of realized neural networks in Sobolev spaces. These experiments demonstrate nonclosedness for a $C^2$ activation function in order-2 Sobolev space. Combining this with the ELU and sigmoid results, we see that these numerical observations of nonclosedness and parameter growth hold for varying degrees of smoothness of the activation function.

\end{document}